\title{An Experimental Design Approach for Regret Minimization in Logistic Bandits}
\author {
    Blake Mason, Kwang-Sung Jun, Lalit Jain
}
\def\ddefloop#1{\ifx\ddefloop#1\else\ddef{#1}\expandafter\ddefloop\fi}
\def\ddef#1{\expandafter\def\csname c#1\endcsname{\ensuremath{\mathcal{#1}}}}
\def\ddef#1{\expandafter\def\csname b#1\endcsname{\ensuremath{{\boldsymbol{#1}}}}}
\def\ddef#1{\expandafter\def\csname h#1\endcsname{\ensuremath{\hat{#1}}}}
\def\ddef#1{\expandafter\def\csname hc#1\endcsname{\ensuremath{\widehat{\mathcal{#1}}}}}
\def\ddef#1{\expandafter\def\csname t#1\endcsname{\ensuremath{\widetilde{#1}}}}
\def\ddef#1{\expandafter\def\csname r#1\endcsname{\ensuremath{\mathring{#1}}}}
\def\ddef#1{\expandafter\def\csname tc#1\endcsname{\ensuremath{\widetilde{\mathcal{#1}}}}}
\def\hmu{{\ensuremath{\hat{\mu}}} }
\DeclareMathOperator{\EE}{\mathbb{E}}
\DeclareMathOperator{\PP}{\mathbb{P}}
\DeclareMathOperator{\one}{\mathds{1}\hspace{-.18em}}
\newcommand{\lsim}{\mathop{}\!\lesssim}
\newcommand{\gsim}{\mathop{}\!\gtrsim}
\def\RR{{\mathbb{R}}}
\newcommand{\sr}[2]{ {\stackrel{#1}{#2}} }
\newcommand{\fr}[2]{ { \frac{#1}{#2} }}
\def\sign{\ensuremath{\text{sign}}}
\def\larrow{\ensuremath{\leftarrow}\xspace} 
\def\T{\ensuremath{\top}}  
\def\sig{\ensuremath{\sigma}\xspace}
\def\eps{\ensuremath{\varepsilon}\xspace}
\def\diag{\ensuremath{\mbox{diag}}\xspace}
\def\om{{\ensuremath{\omega}\xspace} }
\def\dt{{\ensuremath{\delta}\xspace} }
\def\sm{{\ensuremath{\setminus}\xspace} }
\def\lcl{\lceil}  
\def\rcl{\rceil}  
\def\gam{{\ensuremath{\gamma}\xspace} }
\newcommand{\vast}{\bBigg@{3}}
\newcommand{\Vast}{\bBigg@{4}}
\def\cX{\ensuremath{\mathcal{X}}\xspace}
\def\lam{\ensuremath{\lambda}}
\def\hth{{\hat{ \theta}}}
\def\lam{{\ensuremath{\lambda}\xspace} }
\def\cS{{\ensuremath{\mathcal{S}}}}
\def\th{{\ensuremath{\theta}}}
\def\sign{\ensuremath{\normalfont{\text{sign}}}}
\def\cH{{\ensuremath{\mathcal{H}}}}
\def\cd{\cdot}
\DeclareMathOperator{\Supp}{{\mathsf{Supp}}}
\def\hlam{{\hat\lambda}}
\def\eff{{\mathsf{eff}}}
\def\opt{{\mathsf{opt}}}
\def\pes{{\mathsf{pes}}}
\def\dmu{\dot\mu}
\def\mc#1{\mathcal{#1}} 
\def\teff{{\ensuremath{t_\mathsf{eff}}}}
\def\naive{{\mathsf{naive}}} 
\def\tw{\mathsf{w}}
\newcommand{\HOMER}{\texttt{HOMER}\xspace}
\newcommand{\WAR}{\texttt{WAR}\xspace}
\newtheoremstyle{plain2}
  {\topsep}   
  {\topsep}   
  {}  
  {0pt}       
  {\bfseries} 
  {.}         
  {5pt plus 1pt minus 1pt} 
  {}          
\theoremstyle{plain}
\newtheorem{thm}{Theorem}
\newtheorem{lem}[thm]{Lemma}
\newtheorem{cor}[thm]{Corollary}
\theoremstyle{plain2}
\newtheorem{remark}{Remark}
\newtheorem{definition}{Definition}
\newcommand{\htheta}{\hat{\theta}}
\newcommand{\mR}{\mathbb{R}}
\newcommand{\mudot}{\dot{\mu}}
\renewcommand{\ln}{\log}
\newcommand{\toremove}[1]{#1}
\renewcommand{\toremove}[1]{}
\begin{document}

\maketitle


\begin{abstract}
In this work we consider the problem of regret minimization for logistic bandits. The main challenge of logistic bandits is reducing the dependence on a potentially large problem dependent constant $\kappa$ that can at worst scale exponentially with the norm of the unknown parameter $\theta_{\ast}$. Abeille et al. (2021) have applied self-concordance of the logistic function to remove this worst-case dependence providing regret guarantees like $O(d\log^2(\kappa)\sqrt{\dot\mu T}\log(|\mathcal{X}|))$ where $d$ is the dimensionality, $T$ is the time horizon, and $\dot\mu$ is the variance of the best-arm. This work improves upon this bound in the fixed arm setting by employing an experimental design procedure that achieves a minimax regret of $O(\sqrt{d \dot\mu T\log(|\mathcal{X}|)})$. 
  Our regret bound in fact takes a tighter instance (i.e., gap) dependent regret bound for the first time in logistic bandits. 
We also propose a new warmup sampling algorithm that can dramatically reduce the lower order term in the regret in general and prove that it can replace the lower order term dependency on $\kappa$ to $\log^2(\kappa)$ for some instances.
  Finally, we discuss the impact of the bias of the MLE on the logistic bandit problem, providing an example where $d^2$ lower order regret (cf., it is $d$ for linear bandits) may not be improved as long as the MLE is used and how bias-corrected estimators may be used to make it closer to $d$.
\end{abstract}

\toremove{
\textbf{Notes for us to remember:}
\begin{enumerate}
    \item 7 content pages, up to 2 pages for references
    \item references come after appendix
    \item read the formatting instructions pdf and call for paper for specific details about ordering, banned commands etc. 
    \item AAAI is anal about use of color, certain commands, removing spacing
    \item no page numbering allowed
\end{enumerate}
}

\section{Introduction}
Linear bandits, which have gained popularity since their success in online news recommendation~\cite{li10acontextual}, solve sequential decision problems under limited feedback when each action (or arm) to be taken has a known feature vector deemed to predict the reward.
Specifically, at each time step $t$, the learner chooses an arm $x_t\in\RR^d$ from an available pool of arms $\cX$, and then receives a reward $y_t = x_t^\T\th_* + \eta_t$, where $\theta_{\ast}$ is unknown and $\eta$ is usually assumed to be zero-mean subGaussian noise. 
The goal of the learner is to maximize the total cumulative rewards over the time horizon $T$ by judiciously balancing between efficiently learning $\th_*$ (exploration) and using the learned knowledge on $\th_*$ to accumulate large rewards (exploitation).
Since the pioneering studies by~\citet{abe99associative} and~\citet{auer02using}, there have been significant developments in both theory~\cite{dani08stochastic,ay11improved,foster20beyond} and applications~\citep{li10acontextual,SawantHVAE18,Teo2016airstream}.

Many real-world applications, however, have binary rewards and are not captured by the additive noise setting.
For example, the seminal work by~\citet{li10acontextual} for contextual bandits considers a binary reward of click/no-click, yet they apply bandit algorithms based on linear models -- this is comparable to applying linear regressions to a binary classification task.
For binary rewards, the logistic linear model is natural when rewards are assumed to follow $y_t \sim \mathsf{Bernoulli}(\mu(x_t^\T\th_*))$ where $\mu(z) = 1/(1+\exp(-z))$ is the logistic function.
While the link function $\mu$ can be changed, the logistic function is worth more attention for two reasons: (i) it is extensively used in practice, even in state-of-the-art deep architectures whose last layer is the negative log likelihood of the logistic linear model, and (ii) if we were to use a trained network to compute the input to the last layer and take it as the feature vector for bandit tasks, the features are likely optimized to excel with the logistic model.

The first work on logistic bandits due to \citet{filippi10parametric} showed a regret bound of $\tilde O(d\kappa\sqrt{T})$ where $\kappa = \max_{\|x\|_2\leq 1} \mudot(x^{\top}\theta_{\ast})^{-1}$ and $\tilde O$ ignores polylogarithmic factors for all variables except for $\kappa$. 
Since $\kappa$ can be exponential in $\|\theta_{\ast}\|_2$, the key challenge in developing bandit algorithms in the logistic setting both theoretically and practically is to overcome this worst-case scaling. 
In the last few years, there has been a flurry of activity on this problem that exploits the \emph{self-concordance} of the logistic loss with the seminal work of~\citet{faury2020improved}. 
Recently, \citet{abeille2021instance} proposed a UCB style algorithm called \texttt{OFULog}, establishing a regret bound with the leading term of $\tilde O(d\ln^2(\kappa)\sqrt{\mudot(x_{\ast}^{\top}\theta_{\ast})T}+\kappa d^2\wedge (d^2+|\mc{X}_{-}|))$ where $\mc{X}_{-}\subset \mc{X}$ is a set of \textit{detrimental arms}. 
In the finite armed contextual bandit setting, \citet{jun2020improved} propose an improved fixed design confidence interval and adapted a SupLinRel style algorithm \cite{auer02finite} called \texttt{SupLogistic} to establish a regret scaling like $\tilde{O}(\sqrt{dT})$.
\texttt{SupLogistic} achieves a better dependence on $d$ and $\kappa$.
However, it has a worse dependence with $\dmu(x_*^\T\th_*)$ due to the changing arm set setting and makes a strong assumptions of stochastic contexts and bounded minimum eigenvalues.
The regret bound of \texttt{OFULog} is free of these assumptions, but the leading term is suboptimal.
We discuss key related work throughout the paper and postpone detailed reviews to our supplementary.

Motivated by the gaps in the regret bounds, we make the following contributions.

\textbf{Improved logistic bandit algorithm} (Section~\ref{sec:alg}): 
    We take an experimental design approach to propose a new bandit algorithm called \HOMER (H-Optimal MEthod for Regret) that achieves the best of the two state-of-the-art algorithms above: $\tilde O(\sqrt{d\dmu(x_*^\T\th_*)T\log(|\mc{X}|)} + d^2\kappa)$ in the fixed-arm setting where the lower order term matches the state-of-the-art \texttt{OFULog} in the worst case.
    In fact, we prove an even tighter instance-dependent (i.e., gap-dependent) regret bound of $O(\fr{d\dmu(x_*^\T\th_*) \ln(|\mc{X}|T)}{\Delta} + d^2\kappa) $ for the first time to our knowledge where $\Delta $ is the reward gap between the best arm among the suboptimal arms and the overall best arm.  
    
\textbf{Novel warmup algorithm} (Section~\ref{sec:warmup}): 
    While \HOMER achieves the best worst-case regret guarantee, it must be equipped with a warmup sampling procedure.
    Using a naive sampling procedure, \HOMER will incur $d^2\kappa$ regret during the warmup. 
    This stems from having to use fixed design confidence bounds \cite[Theorem 1]{jun2020improved}-- without them there is no known ways to achieve the factor $\sqrt{d}$ in the leading term of the regret bound when there are finitely many arms -- that require the observed arms and their rewards to satisfy so-called ``warmup'' condition (see~\eqref{eq:warmup} below).
    In order to improve its practical performance, we propose a novel adaptive warmup algorithm called \WAR (Warmup by Accepts and Rejects), which performs the warmup with much fewer samples than $\kappa d^2$ in general.
    We prove its correctness guarantee, show that for 1d it never spends more samples than the naive warmup, and present an arm-set dependent optimality guarantee.
    
\textbf{Conjectures on the dimension dependence of the fixed design inequalities} (Section~\ref{sec:dimension}): 
    Omitting the $\kappa$ dependence and logarithmic factors in this paragraph, all existing regret bounds for logistic bandits have an $d^2$ dependence in the lower order term, which is in stark contrast to an $d$ dependence in linear bandits~\cite{auer02using}.
    The consequence is that logistic bandit algorithms suffer a linear regret until $T=d^2$ in the worst case.
    Does this mean logistic bandits are fundamentally more difficult than linear bandits in high dimensional settings?
    While we do not have a complete answer yet, we provide a sketch of an argument that, when the MLE is used, such a $d^2$ dependence might be unimprovable.
    The argument starts from the classical fact that the MLE of generalized linear models (GLMs) are biased, e.g., ~\cite{bartlett53approximate}.
    Based on this fact, we observe that in order to obtain tight fixed design confidence bounds we need to perform oversampling of arms as a function of $d$ until the squared bias gets smaller than the variance.
    Furthermore, based on the known fact that in 1d the KT estimator~\cite{krichevsky81theperformance} is much less biased than the MLE~\cite{cox89analysis}, which we verify numerically as well, we propose a new estimator and conjecture that it may lead to the lower order regret term of $ O(d^{4/3})$. 

\section{Regret Minimization for Logistic Bandits}\label{sec:alg}



Let us formally define the problem.
We assume access to a finite and fixed set of arms $\mc{X}\subset  \{x\in\mathbb{R}^d: \|x\|_2 \le 1 \}$. At each time $t\geq 1$ the learner chooses an arm $x_t\in \mc{X}$ and observes a Bernoulli reward $y_t\in \{0,1\}$ with 
\[\mathbb{E}[y_t|x_t] = \mu(x_t^{\top}\theta_*)~.\]
Let $x_* = \arg\max_{x\in \mc{X}}x^\T \theta_{\ast}$.
For ease of notation, we assume that $x_{\ast}$ is unique though this condition can be relaxed.
The goal of the learner is to minimize the \emph{cumulative (pseudo-)regret} up to time $T$:
$R_T := \sum_{t=1}^T \mu(x_{\ast}^{\top}\theta_{\ast}) - \mu(x_t^{\top}\theta_{\ast})$.

\textbf{Notations.} Let $ {\mc{F}_t}$ be the sigma algebra generated by the set of rewards and actions of the learner up to time $t$, i.e., $\sigma(x_1, y_1, \ldots, x_{t-1}, y_{t-1})$. 
We assume that the learner has knowledge of an upper bound $S$ on $\|\theta_{\ast}\|$. 
Define $ {\kappa} := \max_{\|x\|\leq 1} \mudot(x^\T \theta_\ast)^{-1}$ and $ {\kappa_0} := \max_{x\in \cX} \mudot(x^\T \theta_\ast)^{-1}$, the inverse of the smallest derivative of the link function among elements of $\mc{X}$. 
Denote by $ {\triangle_\cA}$ the set of probability distributions over the set $\cA$.
Let $\Supp(\lambda)$ with $\lambda \in \triangle_\cA$ be the subset of $\cA$ for which $\lambda$ assigns a nonzero probability.
We use $A\lsim B$ to denote $A$ is bounded by $B$ up to absolute constant factors.



\textbf{Logistic Regression.}
We review logistic regression. 
Assume that we have chosen measurements $x_1, \ldots, x_t$ to obtain rewards $y_1,\ldots, y_t$. 
The \emph{maximum likelihood estimate} (MLE), $\hat{\theta}$ of $\theta_{\ast}$ is given by, 
\begin{align}\label{eq:mle}
    \hat{\theta} = \arg\max_{\theta\in \mR^d} \sum_{s=1}^t y_s\log(\mu(x_s^{\top}\theta))+(1-y_s)\log(1-\mu(x_s^{\top}\theta))
 \end{align}
The Fisher information of the MLE estimator is  $H_t(\theta_{\ast}) := \sum_{s=1}^t \mudot(x_s^{\top}\theta_{\ast}) x_sx_s^{\top}$.
Obtaining (near-)optimal regret hinges on the availability of tight confidence bounds on the \emph{means} of each arm. 
For technical reasons, tight confidence bounds (i.e., without extra factors such as $\sqrt{d}$ in the confidence bound like~\cite{faury2020improved}) require the data to observe the \textit{fixed design} setting: for each $s\in [t]$, $y_s$ is conditionally independent of $\{x_s\}_{s=1}^t\setminus \{x_s\}$ given $x_s$.
Recent work by \citet{jun2020improved} provide a tight finite-time fixed design confidence interval on the natural parameter of $x^\T \theta_\ast$ for any $x\in \mc{X}$. 
For regret minimization,  we instead require estimates of the mean parameter $\mu(x^\T \theta_\ast)$  each arm $x\in\cX$. 
Define ${\gamma(d)} \!:=\max\{\! d + \log(6(2\!+\!\teff)/\dt), 6.1^2\log(6(2\!+\!\teff)/\dt)\}$ where $t_{\eff}$ is the number of distinct vectors in $\{x_s\}_{s=1}^t$. We refer to the following assumption on our samples as the \emph{warmup condition}:
 \begin{equation}\label{eq:warmup}
      {\xi^2_t} := \max_{1\leq s\leq t} \|x_s\|^2_{H_{t}(\th_*)^{-1}} \le \fr{1}{\gamma(d)}~.
 \end{equation}
\begin{lem}\label{lem:mean_param_bound}
Fix $\delta \leq e^{-1}$. Let $\hat{\theta}_t$ denote the MLE estimate for a fixed design $\{x_1, \ldots, x_t\} \subset \cX$. Under the warmup condition~\eqref{eq:warmup}, with probability $1-\delta$, we have, $\forall x \in \cX$,
\begin{align} \label{eq:lem-mean_param_bound}
    &|x^\T(\hth_t - \th_*)| \le 1 \text{ and } 
    |\mu(x^\T\hat{\theta}_t) - \mu(x^\T{\theta}_\ast)| \\
    &\hspace{0.25cm}\leq 4.8\dot{\mu}(x^\T\theta_\ast)\|x\|_{H_t(\theta_\ast)^{-1}}\sqrt{\log(2(2 + t_\text{eff})|\cX|/\delta)}\nonumber. 
\end{align}
\end{lem}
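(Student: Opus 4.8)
The plan is to derive the mean-parameter bound as a corollary of the fixed-design confidence bound on the \emph{natural} parameter due to \citet{jun2020improved}, converting a natural-parameter deviation into a mean-parameter deviation through the mean value theorem combined with the self-concordance of the logistic link. Throughout, the warmup condition~\eqref{eq:warmup} is precisely the hypothesis that makes the tight (i.e.\ $\sqrt d$-free) bound of \citet{jun2020improved} applicable: by forcing $\max_s \|x_s\|^2_{H_t(\th_*)^{-1}}\le 1/\gamma(d)$ with $\gamma(d)\gtrsim d$, it places $\hth_t$ in the self-concordant neighborhood of $\th_*$ and removes the spurious $\sqrt d$ factor present in adaptive bounds such as \citet{faury2020improved}.

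First I would invoke \citet[Theorem 1]{jun2020improved} for a single arm $x$ at confidence level $\delta'$: under~\eqref{eq:warmup}, with probability at least $1-\delta'$ their analysis localizes the MLE so that $|x^\T(\hth_t-\th_*)|\le 1$, and simultaneously delivers the refined bound $|x^\T(\hth_t-\th_*)|\le C\,\|x\|_{H_t(\th_*)^{-1}}\sqrt{\log(2(2+\teff)/\delta')}$ for an explicit absolute constant $C$. Taking a union bound over the $|\cX|$ arms with $\delta'=\delta/|\cX|$ replaces $\log(2(2+\teff)/\delta')$ by $\log(2(2+\teff)|\cX|/\delta)$ and places both statements, for every $x\in\cX$, on a single event of probability at least $1-\delta$. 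This already yields the first claim of the lemma; crucially, the union bound must be taken \emph{here}, so that the localization $|x^\T(\hth_t-\th_*)|\le 1$ holds for every arm on the \emph{same} event used in the conversion below.

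Next I would pass to the mean. Writing $z=x^\T\th_*$ and $\hat z=x^\T\hth_t$, the mean value theorem gives $\mu(\hat z)-\mu(z)=\big(\int_0^1 \dmu(z+s(\hat z-z))\,\diff s\big)(\hat z-z)$. Self-concordance of the logistic link, $\dmu(z_1)\le\dmu(z_2)\exp(|z_1-z_2|)$, together with the localization $|\hat z-z|\le 1$ from the previous step, bounds the averaged derivative by $\dmu(z)\int_0^1 e^{s|\hat z-z|}\,\diff s=\dmu(z)\,\tfrac{e^{|\hat z-z|}-1}{|\hat z-z|}\le (e-1)\,\dmu(z)$. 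Multiplying this by the refined natural-parameter bound gives $|\mu(x^\T\hth_t)-\mu(x^\T\th_*)|\le (e-1)\,C\,\dmu(x^\T\th_*)\,\|x\|_{H_t(\th_*)^{-1}}\sqrt{\log(2(2+\teff)|\cX|/\delta)}$, which is~\eqref{eq:lem-mean_param_bound} once one checks that $(e-1)C\le 4.8$.

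I expect the genuine work to be bookkeeping rather than a conceptual hurdle: one must verify that the constant $C$ from \citet{jun2020improved} times the self-concordance factor (at most $e-1<1.72$ when $|\hat z-z|\le 1$) does not exceed $4.8$, and that the localization and refined bounds are derived on one common event so the self-concordance step is licensed simultaneously for all $x\in\cX$. The hypothesis $\delta\le e^{-1}$ guarantees $\log(2(2+\teff)|\cX|/\delta)\ge 1$, which is what lets the lower-order remainder terms of the natural-parameter bound be absorbed into its leading term so that the conversion is essentially lossless.
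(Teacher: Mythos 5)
Your proposal is correct and follows essentially the same route as the paper's proof: invoke \citet[Theorem 1]{jun2020improved} under the warmup condition with a union bound over $\cX$ to get the natural-parameter bound and the localization $|x^\T(\hth_t-\th_*)|\le 1$, then convert to the mean parameter via the mean value theorem and the self-concordance bound $\alpha(z_1,z_2)\le \tfrac{e^{D}-1}{D}\,\dmu(z_2)\le (e-1)\dmu(z_2)$. The only discrepancy is in the final constant bookkeeping, which is an inconsistency in the paper itself (its proof yields $3.5(e-1)\approx 6.1$ rather than the stated $4.8$), so your caveat about verifying $(e-1)C\le 4.8$ is well placed.
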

The proof relies on Theorem 1 of \cite{jun2020improved} and the (generalized) self-concordance of the logistic loss (\citet[Lemma 9]{faury2020improved}). 
While similar results were used in the proof of SupLogistic in \citet{jun2020improved}, for our bandit algorithm below it is crucial to guarantee $\forall x\in\cX, |x^\T(\hth_t - \th_*)|\le 1$ as well.
As far as we know, this is the first non-trivial confidence bound on the mean parameter for logistic models. To contextualize this result, consider that via the delta-method, 
$    \sqrt{t}(\mu(x^\T\htheta) - \mu(x^\T\theta_\ast))   \xrightarrow{D} \mc{N}\left(0, \dmu(x^\T\theta_\ast)^2\|x\|_{H_t(\theta_\ast)^{-1}}^2\right)$.
Hence, Lemma~\ref{lem:mean_param_bound} guarantees an asymptotically tight normal-type tail bound up to constant factors provided the warmup condition holds. 



\textbf{Experimental design.}
We leverage our tight confidence widths on the means $\mu(x^\T \theta_\ast)$ given in Lemma~\ref{lem:mean_param_bound} to develop a novel logistic bandit algorithm.
Motivated by the form of the confidence width on the mean parameter, we consider the following experimental design problem:
\begin{align*}
   {h^*} &:= \min_{\lambda\in \triangle_{\mc{X}}}\max_{x\in \mc{X}} \mudot(x^\T \theta_*)^2\|x\|_{H_{\lambda}(\theta_*)^{-1}}^2 \\
   &\text{~~ where ~~} H_{\lambda}(\theta) = \sum_{x\in \mc{X}} \lambda_x \mudot(x^\T \theta) xx^\T~,
\end{align*}
which we refer to as an \emph{$H$-optimal design}. 
That is, we want to find an allocation of samples over $\mc{X}$ minimizing the worst case confidence-width of Lemma~\ref{lem:mean_param_bound} for any $x\in \mc{X}$.
Note that $h^*$ depends on the arm set $\cX$, though we omit it for brevity.
This experimental design is closely linked to the $G$-optimal design objective for the MLE in an exponential family. Indeed, in our setting, the $G$-optimal design is 
\begin{align}\label{eq:g-optimal}
   {g^*} := \min_{\lambda\in\triangle_\cX} \max_{x\in \mc{X}} \|x\|^2_{H_\lambda(\theta_*)^{-1}}
\end{align}
We point out that in the setting of linear bandits (i.e. the linear GLM), the celebrated Kiefer-Wolfowitz theorem states that the optimal value of this objective is just $d$  for any choice of $\mc{X}$ \cite{kiefer60theequivalence}. 
Hence, for any $\theta\in \mathbb{R}^d$, letting $\mc{Y} = \{\sqrt{\mudot(x^\T \theta)}x, x\in \mc{X}\}$ we see that
\begin{align*}
  h^*& = \min_{\lambda\in \triangle_{\mc{X}}}\max_{x\in \mc{X}} \mudot(x^\T \theta)^2\|x\|_{H(\theta_*)^{-1}}^2 \\
  &
  \leq \frac{1}{4} \min_{\lambda\in \triangle_{\mc{Y}}}\max_{y\in \mc{Y}} \|y\|^2_{(\sum_{y\in \mc{Y}} \lambda_y yy^{\top})^{-1}} 
  \leq \frac{d}{4}
\end{align*}
where we use $\mudot(z) \leq 1/4$ for the first inequality and the Kiefer-Wolfowitz theorem for the last inequality.
Since $\mudot(x^\T \theta)$ decays exponentially fast in $|x^\T \theta|$, this bound is overly pessimistic and in practice the values of the $H$-optimal will be much smaller.
In contrast, for the logistic setting, the $G$-optimal design objective may be large and we only have a naive bound $f_G(\mc{X})\leq \kappa_0 d$ obtained by naively lower bounding $H(\lambda) \geq \kappa_0\sum_{x\in \mc{X}} \lambda_x xx^\T $. In general these two criteria can produce extremely different designs. We provide an example where these designs are very different in our supplementary, see Figure~\ref{fig:exp_designs}.

Though experimental design for logistic models is an important and abundant topic, e.g., social science  applications where tight estimates on entries of $\theta$ are required for causal interpretation \cite{erlander05welfare}, as far as we know, the design above has not previously been proposed.
The closest that we are aware of is \citet{russell18design} that considers $\dmu(x^\T\th) \|x\|^2_{H(\th)^{-1}}$ rather than $\dmu(x^\T\th)^2 \|x\|^2_{H(\th)^{-1}}$. 
Most existing studies on optimal design in nonlinear models study theoretical properties of the design problem assuming the knowledge of $\th^*$ and then uses a plugin estimate.
However, they hardly study under what conditions the plugin estimate must satisfy for the plug-in design problem to closely approximate the true design problem and how one can efficiently collect data for the plugin estimate.
In contrast, we address these in our paper for the $H$- and $G$-optimal design problem for logistic models.

\subsection{From experimental design to regret minimization.}
We now introduce our primary algorithm, \HOMER (\textbf{H}-\textbf{O}ptimal \textbf{ME}thod for \textbf{R}egret) which is centered around the $H$-optimal design objective and the confidence bound on the mean parameter given in Lemma~\ref{lem:mean_param_bound}. To build an initial estimate $\htheta_0$ of $\theta_\ast$, the algorithm begins by calling a warmup procedure \texttt{WarmUp} with the following guarantee: 

\begin{definition}
  A warmup algorithm $\cA$ is said to be $\dt$-valid if it returns an estimator $\hth_0$ such that it is certified to have $\PP(\forall x\in \cX: |x^\T(\hth_0 - \th_*)| \le 1 ) \ge 1-\dt$.
\end{definition}
One natural attempt would the aforementioned experimental design approach of solving $ g^{\ast} = \min_{\lam \in \triangle_\cX } \max_{x\in\cX} \|x\|^2_{H_\lam (\th_*)^{-1}}  $.
We are guaranteed a solution $\lam^*$ with a support of at most $d(d+1)/2$ via Caratheodory's theorem; e.g., see~\citet[Theorem 21.1]{lattimore2020bandit}.
We can then pull arm $x$ exactly $\lceil \lam^*_x g^* \gamma(d)\rceil$ times to satisfy~\eqref{eq:warmup}, which in turns makes the MLE $\hth_0$ trained on these samples to perform a $\dt$-valid warmup.
However, we do not know $\th_*$.
Fortunately, when an upper bound $S$ on $\th_*$ is known, we can we consider the following naive warm-up procedure:
\begin{align}\label{eq:warmup-naive}
  {g^{\naive}} =&  \min_{\lam\in\triangle_{\cX}} \max_{x\in\cX} \|x\|^2_{(H^\naive_\lam)^{-1}}  \nonumber\\
  & 
  \text{~~ where ~~}  {H^\naive_\lam} = \sum_x \lam_x \dmu(\|x\|S)x x^\T~.
\end{align}
%
Let $\hat{\lambda}^{\naive}$ be the solution to this problem. Since $x^\T \th_* \le \|x\|\|\th_*\|$, we have $g^\naive \ge g^*$, so we can guarantee that the warmup condition~\eqref{eq:warmup} is satisfied when pulling each arm $x$ exactly  $\lceil \hat{\lambda}^{\naive}_x g^\naive \gamma(d)\rceil$ times.
Computing the MLE $\hth_0$ with these samples leads to a $\dt$-valid warmup with the sample complexity of $O(g^\naive\gam(d) + d^2)$ which in the worse case is  $O(\kappa d^2)$.
We discuss a more efficient warmup algorithm in Section~\ref{sec:warmup}; in this section, let us use this naive warmup procedure.


The pesudocode of \HOMER can be found in Algorithm~\ref{alg:mix-exp-design-regret}.
In each round $k$, \HOMER maintains an active set of arms $\cX_k$ and computes two experimental design objectives over $\cX_k$, each using the MLE estimate from the previous round $\htheta_{k-1}$. The main experimental design, denoted $\lambda_k^H$,
is the H-optimal design in Line~\ref{line:H-opt} which ensures the gap $\mu(x_\ast^\T\theta_\ast) - \mu(x^\T \theta_\ast)$ is estimated to precision $2^{-k}$. This allows us to remove any arm $x$ whose gap is significantly larger than $2^{-k}$. 
The second optimal design, denoted $\lambda_k^G$,
is a G-Optimal design given in Line~\ref{line:G-opt}. It is necessary to ensure that the warmup condition holds in each round as samples are not shared across rounds. In order to trade off these two, possibly competing design criteria, \HOMER computes a mixture of $\lambda_k^H$ and $\lambda_k^G$ (denoted $\lambda_k$) with approximately $1-2^{-k}$ of the mass being given to the $H$-optimal design $\lambda_k^H$. 
Rather than sampling directly from this distribution
\HOMER relies on an efficient rounding procedure, \texttt{Round}. 
Given a distribution $\lambda$, tolerance $\epsilon$, and number of samples $n$, \texttt{Round} returns an allocation $\{x_i\}_{i=1}^n$ such that $H_n(\theta)$ is within a factor of $(1+\epsilon)$ $H(\lambda, \theta)$ for any $\theta \in \mathbb{R}^d$ provided $n \geq r(\epsilon)$ for a minimum number of samples $r(\epsilon)$. Efficient rounding procedures are discussed in \citet{fiez19sequential}. Recent work \citep[]{camilleri2021high} has shown how to avoid rounding for linear bandits through employing robust mean estimators, but it remains an open question for logistic bandits. 
\HOMER passes the mixed distribution to \texttt{Round} and samples according to the returned allocation to compute an MLE estimate $\htheta_k$. Finally it removes suboptimal arms using plug in estimates of their means $\mu(x^\T\htheta_k)$. 
\begin{algorithm}[t] \caption{\HOMER: H Optimal MEthod for Regret}
  \label{alg:mix-exp-design-regret}
  \small
  \begin{algorithmic}
    \Require{$\epsilon$, $\delta$, $\cX$, $\kappa_0$ }
    \State{$k=1, \cX_1 = \cX$, $\gamma(d, n, \delta) := \max\{d + \log(6(2+n)/\delta), 6.1^2\log(6(2+n)/\delta)\} $}
    \State {$\hat\theta_0\leftarrow$ \texttt{WarmUp}($\cX$)}  
    \While{$|\cX_k| > 1$ } 
    \State  $\delta_k = \delta/(4(2+|\cX|)|\cX|k^2)$
    \State{$\lambda_{k}^H = \arg\min_{\lambda\in \triangle_{\cX_k}} {\hat h_k(\lam)}$ \label{line:H-opt}}
    \State{\hspace{0.5cm}for ${\hat h_k(\lam)} := \max_{x\in\cX_k} \dot{\mu}(x^\T\hat{\theta}_{k-1})^2\|x\|_{H_\lambda(\hat{\theta}_{k-1})^{-1}}^2$}
    \State{$\lambda_{k}^G = \arg\min_{\lambda\in \triangle_{\cX_k}} \del{ {\hat g_k(\lambda)}:=
        \max_{x\in\cX_k} \|x\|_{H_\lambda(\hat{\theta}_{k-1})^{-1}}^2} $\label{line:G-opt}} 
    \State{$n_k^H = \lceil 6(1+\epsilon)6.1^2 3^3 2^{2k} \hat h_k(\lam_k^H)\log(\delta_k^{-1}) \rceil$}
    \State{$n_k^G = \lceil 6(1+\epsilon)\gamma(d, |\cX_k|, \delta_k) \hat g_k(\lam_k^G) \rceil$}
    \State{$\Tilde{\lambda}_{k,i} = \max\left\{\frac{n_k^H}{n_k^H + n_k^G}\lambda_{k,i}^H, \frac{n_k^G}{n_k^G + n_k^H}\lambda_{k,i}^G \right\}, \forall i\in[n], $}
    \State{\hspace{0.5cm}where $\lambda_{k,i}^H$ and $\lambda_{k,i}^G$ are the $i$-th entry of $\lambda_k^H$ and $\lambda_k^G$}
    \State{$\lambda_{k,i} = \Tilde{\lambda}_{k,i} /\sum_{j=1}^n\Tilde{\lambda}_{k,j}$ and $n_k = \max\{n_k^H + n_k^G, r(\epsilon)\}$}
    \State{$x_1, \ldots, x_{n_k} \larrow \texttt{Round}( n_k, \lambda_k, \epsilon) $} 
    \State{Observe $y_1, \cdots, y_{n_k}$, compute MLE $\hat\theta_k$ with $ \{(x_i,y_i)\}_{i=1}^{n_k}$. }
    \State{$\cX_{k+1}\leftarrow \cX_k \setminus \left\{x\in \cX_k:\underset{x'\in \cX_k}\max\mu(x'^\T\hat{\theta}_k) - \mu(x^\T\hat{\theta}_k) \geq 2\!\cdot \!2^{-k}\right\}$}
    \State{$k\gets k+1$}
    \EndWhile
    \State Continue to play the unique arm in $\mc{X}_k$ for all time.
  \end{algorithmic}
\end{algorithm}

\textbf{Theoretical Guarantees.}
We now present theoretical guarantees of \HOMER.

\begin{thm}\label{thm:mix_regret_complexity}

  Fix $\delta \leq e^{-1}$ and suppose \texttt{WarmUp} draws at most $T_B$ samples and incurs regret at most $R_B$. 
  Define $ {T'} := T - T_B$ and assume $T' > 0$.
  Choose a rounding procedure with $r(\eps) = O(d/\eps^2)$  (e.g., \citet[Appendix B]{fiez19sequential}) and set $\eps = O(1)$.
  Let $\Delta := \min_{x\in \mc{X} \sm \{x_*\}} \mu(x_{\ast}^{\top} \theta_{\ast}) - \mu(x^\T  \theta_*)$
  Then, with probability at least $1-2\delta$, 
  \HOMER obtains a regret within a doubly logarithmic factor of 
  \begin{align*}
    R_{B} &+ \min_{\nu\geq 0} \del{ T \nu + \frac{d \dmu(x_*^\T\th_*)}{\Delta \vee \nu} \ln\del{\fr{|\cX|}{\dt}} + d\ln\del{\frac{1}{\Delta \vee \nu}} } \\
    &+ d\kappa_0\del{d + \ln\del{\frac{|\cX|}{\dt}}}~.
  \end{align*}
\end{thm}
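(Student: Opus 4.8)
The plan is to treat \HOMER as a phased-elimination algorithm and prove a high-probability invariant by induction over rounds, then sum the per-round regret as a geometric series. First I would fix a global good event via a union bound over rounds: with the per-round budget $\delta_k = \delta/(4(2+|\cX|)|\cX|k^2)$ we have $\sum_k \delta_k \le \delta$, and together with the $\delta$-valid warmup guarantee for $\htheta_0$ this costs an extra $\delta$. On this event I want three facts to hold simultaneously in each round $k$: the warmup condition~\eqref{eq:warmup} for the samples drawn that round, the mean-parameter bound of Lemma~\ref{lem:mean_param_bound} uniformly over $\cX_k$, and the first-part conclusion $|x^\T(\htheta_{k-1}-\th_*)|\le 1$ for every surviving arm.

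The technical crux, which I would establish first, is a plug-in robustness claim: whenever $|x^\T(\htheta_{k-1}-\th_*)|\le 1$, generalized self-concordance of the logistic loss (\citet[Lemma 9]{faury2020improved}) forces $\dmu(x^\T\htheta_{k-1})\asymp\dmu(x^\T\th_*)$ up to a factor $e^{\pm1}$. Consequently the empirical objectives $\hat h_k$ and $\hat g_k$ built from $\htheta_{k-1}$ are within constant factors of the true $H$- and $G$-optimal values on $\cX_k$ at $\th_*$, and the \emph{true} confidence width is controlled by the plug-in design. Using the rounding guarantee of \texttt{Round} and the mixing weights (which give at least $n_k^H/(n_k^H+n_k^G)$ mass to $\lambda_k^H$ and $n_k^G/(n_k^H+n_k^G)$ to $\lambda_k^G$), I get $H_{n_k}(\th_*)\succeq c\,n_k^H\,H_{\lambda_k^H}(\th_*)$ and $H_{n_k}(\th_*)\succeq c\,n_k^G\,H_{\lambda_k^G}(\th_*)$. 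The first inequality makes the squared width $\dmu(x^\T\th_*)^2\|x\|^2_{H_{n_k}(\th_*)^{-1}}\log(\cdot)\lesssim \hat h_k(\lambda_k^H)\log(\delta_k^{-1})/n_k^H\lesssim 2^{-2k}$ by the choice of $n_k^H$; the second makes $\max_s\|x_s\|^2_{H_{n_k}(\th_*)^{-1}}\lesssim \hat g_k(\lambda_k^G)/n_k^G\lesssim 1/\gamma(d)$ by the choice of $n_k^G$, which is exactly why the constants $6.1^2$ and $\gamma(d,|\cX_k|,\delta_k)$ are inserted, so the warmup condition is met and Lemma~\ref{lem:mean_param_bound} applies.

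I would then run the induction maintaining (a) $x_*\in\cX_k$ for all $k$, and (b) $\max_{x\in\cX_{k+1}}\big(\mu(x_*^\T\th_*)-\mu(x^\T\th_*)\big)\le c\,2^{-k}$. Since every mean on $\cX_k$ is estimated to precision $\lesssim 2^{-k}$, the elimination threshold $2\cdot 2^{-k}$ retains $x_*$ and discards any arm of gap $\gtrsim 2^{-k}$, giving the step; the base case comes from the warmup and round $1$. To convert the invariant into regret I must bound the design values on $\cX_k$. For the $H$-design I pass to $y_x=\sqrt{\dmu(x^\T\th_*)}\,x$, note $\dmu(x^\T\th_*)^2\|x\|^2_{H_\lambda(\th_*)^{-1}}=\dmu(x^\T\th_*)\|y_x\|^2_{V_\lambda^{-1}}$ with $V_\lambda=\sum_x\lambda_x y_xy_x^\T$, and apply Kiefer--Wolfowitz to get $h^*(\cX_k)\le d\max_{x\in\cX_k}\dmu(x^\T\th_*)$. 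Combining with the elementary inequality $\dmu(x^\T\th_*)\le\dmu(x_*^\T\th_*)+\big(\mu(x_*^\T\th_*)-\mu(x^\T\th_*)\big)$ and invariant (b) yields $h^*(\cX_k)\lesssim d\big(\dmu(x_*^\T\th_*)+2^{-k}\big)$, while the naive bound $g^*(\cX_k)\le\kappa_0 d$ handles the $G$-design.

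Finally I would sum $R_T\le R_B+\sum_k n_k\cdot\max_{x\in\cX_k}\big(\mu(x_*^\T\th_*)-\mu(x^\T\th_*)\big)$. The $H$-part of round $k$ contributes $\lesssim 2^{-k}n_k^H\lesssim 2^k\hat h_k\log(\delta_k^{-1})\lesssim \big(2^k d\,\dmu(x_*^\T\th_*)+d\big)\log(\delta_k^{-1})$; running the rounds down to $2^{-k}\asymp\Delta\vee\nu$, the geometric term $\sum_k 2^k d\,\dmu(x_*^\T\th_*)\log(\delta_k^{-1})$ is dominated by its last summand and gives $\tfrac{d\,\dmu(x_*^\T\th_*)}{\Delta\vee\nu}\ln(|\cX|/\delta)$, the residual $\sum_k d\log(\delta_k^{-1})$ gives the $d\ln\tfrac{1}{\Delta\vee\nu}$ term, and every $\log k$ in $\delta_k$ is swept into the stated doubly-logarithmic factor. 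The $G$-part contributes $\lesssim\sum_k 2^{-k}\gamma(d)\kappa_0 d$, which telescopes since $\sum_k 2^{-k}=O(1)$, producing the lower-order $d\kappa_0(d+\ln(|\cX|/\delta))$, and the $r(\epsilon)=O(d)$ rounding floor adds only $O(d)$. The $\min_{\nu\ge0}$ and the term $T\nu$ arise by truncating the fine-grained analysis at level $\nu$: once arms have gap below $\nu$ the remaining at most $T$ pulls each cost $\le\nu$, and the denominator becomes $\Delta\vee\nu$ because the algorithm naturally stops refining at $\max(\Delta,\nu)$. The main obstacle throughout is the self-consistency of the induction: the confidence bound and the warmup condition are both phrased in terms of the unknown $\th_*$ while the designs are computed from $\htheta_{k-1}$, so the plug-in robustness claim must transfer all three per-round guarantees at once without losing more than constant factors.
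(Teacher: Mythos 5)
Your proposal follows essentially the same route as the paper's proof: the plug-in robustness claim is the paper's Lemma~\ref{lem:warmup_guarantee}, the mixture lower bounds $H_{n_k}(\th_*)\succeq c\,n_k^{H}H_{\lambda_k^H}(\th_*)$ and $\succeq c\,n_k^{G}H_{\lambda_k^G}(\th_*)$ are Lemma~\ref{lem:mix_h_vs_normal}, the elimination induction is Lemma~\ref{lem:mix_mean_param_regret_correct}, and the final accounting (Kiefer--Wolfowitz on $\sqrt{\dmu(x^\T\th_*)}\,x$, the bound $\max_{x\in\cX_k}\dmu(x^\T\th_*)\le\dmu(x_*^\T\th_*)+O(2^{-k})$, the $\nu$-truncated geometric sum, and the $\kappa_0 d$ bound for the $G$-part) matches the paper's computation step for step. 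The argument is correct; differences are confined to constants (e.g., your $e^{\pm1}$ self-concordance factor versus the paper's factor of $3$) and minor bookkeeping of where the $d\ln\frac{1}{\Delta\vee\nu}$ term originates.
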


\begin{remark}
Using the naive warmup~\eqref{eq:warmup-naive}, the one can show that $R_B = O\left(d^2\kappa\log(|\cX|/\delta)\right)$.
\end{remark}
The \texttt{OFULog} algorithm of
\citep{abeille2021instance} achieves a regret bound of $\tilde O(d\ln^2(\kappa)\sqrt{\mudot(x_{\ast}^{\top}\theta_{\ast})T} + d^2\kappa)$ where the lower order term may be improved for the case where there are few sub-optimal arms so the lower order term scales with the number of arms without the factor $\kappa$. 
\texttt{SupLogistic} of \citet{jun2020improved} follows SupLinRel-type sampling scheme to eliminates arms based on their natural parameter estimates and achieves a regret of $\tilde{O}(\sqrt{dT} + \kappa^2 d^3)$.\footnote{
    \citet{jun2020improved} in fact have reported that the lower order term is $O(\kappa^2 d)$, but there is a hidden dependency on $d$.
    This is because they assume that the expected minimum eigenvalue is at least $\sig_0^2$, but $\sig_0^2$ is $1/d$ at best. Our reported rate is for the best case of $\sig_0^2 = 1/d$.
}
To compare the leading term of the regret bound, SupLogistic achieves a better dependence on $d$ and $\kappa$.
However, it has a worse dependence with $\dmu(x_*^\T\th_*)$ due to the changing arm set setting and makes strong assumptions of stochastic contexts and bounded minimum eigenvalues.
The regret bound of OFULog is free of these assumptions and have a better lower-order terms, but the leading term is suboptimal.

In the following Corollary, we further upper bound the result of Theorem~\ref{thm:mix_regret_complexity} in order to compare to the results of \citep[]{abeille2021instance} and \citep{jun2020improved} and show that \HOMER enjoys state of the art dependence on $d$, $\dmu(x_\ast^\T\theta_\ast)$, and $\log(\kappa)$ simultaneously and also matches the state-of-the-art worst-case lower order terms $d^2\kappa$. 
\begin{cor}\label{cor:minimax_regret}
  Suppose we run \HOMER with $\dt = 1/T$ with the naive warmup~\eqref{eq:warmup-naive}  in the same setting as Theorem~\ref{thm:mix_regret_complexity}.
  Then, \HOMER satisfies 
  \begin{align*}
    \EE[R_T] = &\hat O\left(\left(  \sqrt{dT\dmu(x_\ast^\T\theta_\ast)\log(|\cX|T)} \right.\right. \\
    &\hspace{0.75cm} \left.\left.\wedge \frac{d \dmu(x_*^\T\theta_*) \log(|\cX| T)}{\Delta}  \right) + d^2\kappa \log(|\cX|T) \right)
  \end{align*}
  where $\hat O$ hides doubly logarithmic factors.
\end{cor}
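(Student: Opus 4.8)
The plan is to obtain the corollary as a direct consequence of Theorem~\ref{thm:mix_regret_complexity}, converting its high-probability guarantee into a bound in expectation and then optimizing the free parameter $\nu$. First I would handle the expectation: Theorem~\ref{thm:mix_regret_complexity} holds on an event of probability at least $1-2\dt$, and since each per-step regret satisfies $\mu(x_*^\T\th_*)-\mu(x_t^\T\th_*)\le 1$, the cumulative regret obeys the deterministic bound $R_T\le T$. Hence the complementary failure event contributes at most $2\dt\cdot T$ to $\EE[R_T]$; with $\dt=1/T$ this is a constant ($\le 2$) and is absorbed into the stated bound. On the good event I substitute the bound of Theorem~\ref{thm:mix_regret_complexity}, replace $R_B$ by $O(d^2\kappa\log(|\cX|/\dt))$ using the Remark, and set $\dt=1/T$ throughout, so that every $\log(1/\dt)$ and $\log(|\cX|/\dt)$ becomes $O(\log(|\cX| T))$.

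Next I would optimize over $\nu\ge 0$ in the term $\min_{\nu\ge 0}\bigl( T\nu + \tfrac{d\dmu(x_*^\T\th_*)}{\Delta\vee\nu}\ln(|\cX|/\dt) + d\ln(1/(\Delta\vee\nu)) \bigr)$, which is the source of the $\wedge$ in the statement. Taking $\nu=0$ gives $\Delta\vee\nu=\Delta$ and yields the gap-dependent term $\tfrac{d\dmu(x_*^\T\th_*)}{\Delta}\log(|\cX| T)$. Taking $\nu>0$ and balancing $T\nu$ against $\tfrac{d\dmu(x_*^\T\th_*)}{\nu}\ln(|\cX|/\dt)$, i.e. choosing $\nu=\sqrt{d\dmu(x_*^\T\th_*)\ln(|\cX|/\dt)/T}$ (which is the active branch whenever this quantity is $\ge\Delta$), yields the worst-case term $\sqrt{dT\dmu(x_*^\T\th_*)\log(|\cX| T)}$. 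Since the corollary minimizes over $\nu$, the regret is bounded by the smaller of these two expressions, which is exactly the $\wedge$.

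Finally I would collect the remaining pieces and verify that only doubly-logarithmic factors are hidden by $\hat O$. The term $d\ln(1/(\Delta\vee\nu))$ is $O(d\log T)$ in the minimax regime and is dominated by the additive $d^2\kappa\log(|\cX| T)$ term (using $\kappa\ge 1$ and $d\ge 1$); similarly the lower-order contribution $d\kappa_0(d+\ln(|\cX|/\dt))$ is absorbed into $d^2\kappa\log(|\cX| T)$ because $\kappa_0\le\kappa$ (as $\kappa$ maximizes the inverse derivative over the larger set $\{\|x\|\le 1\}\supseteq\cX$). Combining these gives the claimed $\hat O$ bound. I expect the only mildly delicate step to be confirming that the $\nu$-balancing choice recovers the $\wedge$ cleanly — in particular checking the boundary case $\sqrt{d\dmu(x_*^\T\th_*)\ln(|\cX| T)/T}<\Delta$, where the $\nu=0$ gap-dependent branch is already active — and verifying that every dropped term is genuinely lower order relative to the two leading terms.
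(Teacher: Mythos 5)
Your proposal is correct and follows essentially the same route as the paper's proof: convert the high-probability bound to expectation via the trivial bound $R_T\le T$ on the failure event with $\dt=1/T$, substitute the naive-warmup regret $O(d^2\kappa\log(|\cX|T))$, and take the better of the two choices $\nu=0$ and $\nu\asymp\sqrt{d\dmu(x_*^\T\th_*)\log(|\cX|T)/T}$, absorbing the remaining terms using $\kappa_0\le\kappa$ and $1/\dmu(x_*^\T\th_*)\le\kappa$. No meaningful differences from the paper's argument.
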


This highlights that \HOMER simultaneously enjoys the optimal dependence on $d$ exhibited by \texttt{SupLogistic} and the state of the art dependence on $\dmu(x_\ast^\T\theta_\ast)$ seen in \texttt{OFULog}. Furthermore, \HOMER avoids a dependence on $\log(\kappa)$ in its leading term.


\section{Warmup by Accepts and Rejects (\WAR)}\label{sec:warmup}

We now describe our novel and efficient warmup procedure that is $\dt$-valid. 
The problem with the naive warmup~\eqref{eq:warmup-naive} is that when arms in the support of the design have norm close to 1, then $g^\naive$ will scale with $\kappa d$ leading a to $\kappa d^2$ regret lower order term.
In this section, we propose a novel warmup algorithm called Warmup by Accepts and Rejects (\WAR) that can significantly reduce the number of samples while provably being never worse than the naive warmup.

\textbf{Inuition from 1d.}
Before describing the method, we provide some intuition from the case of $d=1$ with $\cX = [-1, 1]$.
In this case, the design problem is simplified because of the fact that $\lambda^{\ast}$, the solution to the $G$-optimal problem~\ref{eq:g-optimal}, is supported on only one arm.
Thus, it suffices to find $ {x^{\dagger}}$: 
\begin{align}
   {x^{\dagger}} &= \arg \max_{x\in\cX} \dot \mu(x\th_*) x^2 \label{eq:Hoptimal-1d}.
\end{align}
Without loss of generality, assume that $|\th_*| \ge \arg\max_{z\in \RR} \dot\mu(z) z^2 = 2.399..$ ( otherwise $\kappa_0 \le 13.103$ is not too large and we can employ the naive warmup). 
Then, 
\begin{align}
  \max_{x\in[-1,1]} \dot\mu(x\th_*) x^2 &= \fr{1}{(\th_*)^2} \max_{x\in[-1,1]} \dot\mu(x\th_*) x^2 (\th_*)^2 \\
  &= \fr{1}{(\th_*)^2}   \max_{z\in [-|\th_*|, |\th_*|]} \dot\mu(z) z^2 ~\sr{(a)}{=}~ \fr{0.439..}{(\th_*)^2} \nonumber \label{eq:Hoptimal-1d-cont-soln}
\end{align}
where $(a)$ is by the assumption $|\th_*| \ge 2.399..$ and numerical evaluation, and $x^{\dagger} = \arg\max_{x\in[-1,1]} \dot\mu(x\th_*) x^2 = \pm \fr{2.399..}{|\th_*| }$.

\begin{figure}[h]
\centering
        \includegraphics[width=0.8\linewidth]{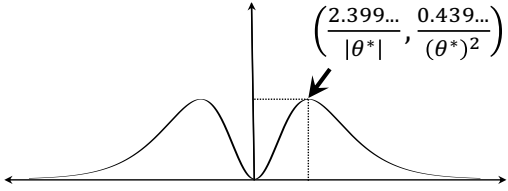}
    \caption{The objective function of~\eqref{eq:Hoptimal-1d} with $\cX =  [-1,1]$.}
    \label{fig:hopt-1d}
\end{figure}
We summarize the solution of the optimization problem above in Figure~\ref{fig:hopt-1d}.
We make two observations from this 1d example. Firstly, somewhat surprisingly the best design for the warmup does not always choose the arm with the largest magnitude unlike $G$-optimal design in the linear case. Secondly, in the best case, the number of samples needed to ensure the warmup is $O(\theta_{\ast}^2)\gamma(1)$.
Thus, we speculate that, for $d>1$, only $O(\|\th_*\|^2)d\gamma(d)$ samples may be needed for the warmup, which is significantly smaller than $\kappa d\gamma(d)$. 

The challenge is that we do not know $\theta^{\ast}$. 
However, we can use a halving procedure to find a constant factor approximation of $|\theta^{\ast}|$ in  $O(\ln(|\th_*|)$ sample complexity.
The key idea is that by choosing an arm $x$ s.t. $|x| \approx 1/|\th_*|$, the rewards conditioned on $x$ must have high variance, guaranteeing $|x\th_*|$ is sufficiently small.
Thus, starting from the arm $x=1$, we can sample until verifying that the reward variance conditional on $x$ is either small enough (e.g., $|x\th_*| \ge 1$) or large enough (e.g., $|x\th_*| \le 2$), which can be done using confidence bounds such as empirical Bernstein's inequality.
Once we verify that the variance is small enough, it means that $|x|$ is large enough, so we can then move on to the next arm $x=1/2$ (i.e., halving).
We repeat this process until we identify an arm whose variance is large enough, which means that we have approximately solved~\eqref{eq:Hoptimal-1d}.
It is easy to see that this procedure terminates in $\sim \ln|\th_*|$ iterations, spending total $\sim \ln|\th_*|$ samples.

Note that finding the arm $x\approx1/|\th_*|$ alone is not sufficient because we need to certify that the warmup condition~\eqref{eq:warmup} holds.
For this, we realize that the series of accept/rejects form a confidence bound on $\th_*$.
Specifically if $\hat{x}$ is the arm that is accepted in the last iteration we have that $\theta_{\ast}\in C:= \{\theta\in \mathbb{R}: 1/(2|\hat x|) \le |\th_*| \le 2/|\hat x|\}$ using the fact that $2\hat{x}$ was rejected. 
We can then solve the design $\min_{\lambda}\max_{x\in [-1,1], \theta\in C} \|x\|_{H_{\lambda}(\theta)^{-1}}^2$ and then sample to certify~\eqref{eq:warmup} with high probability. 


\textbf{Warmup by Accepts and Rejects (\WAR).} 
The remaining challenge is to extend the previous halving procedure to generic discrete arm sets and also to the multidimensional case.
This leads to our algorithm \WAR described in Algorithm~\ref{alg:war}.
Before describing the details, we introduce key quantities.
Assuming an arm $x$ is pulled $N$ times, let $\hmu_x$ be the empirical mean of the observed rewards.
We now construct valid lower and upper confidence bounds $ {L_x}$ and $ {U_x}$ on $|x^\T\th_*|$ where $L_x$ can be 0 and $U_x$ can be $\infty$.
To do so, for each arm $x$ we will use an anytime version of the empirical Bernstein inequality~\cite{mnih08empirical} which has the following confidence width:
\begin{align*}
     {W_x}  := \sqrt{\fr{\hmu_x(1-\hmu_x) 2\ln(3/\dt_N)}{N}} + \fr{3\ln(3/\dt_N)}{N} 
\end{align*}
where $ {\dt_N} = |\cX|N(N+1)/\dt$. 
\begin{thm}[\citet{mnih08empirical}]
  Define $  {\cE} = \cbr{\forall x \in \cX, \forall N\ge 1, \mu(x^\T\th_*) \in [\hmu_x - W_x, \hmu_x + W_x } $.
  Then, $\PP(\cE) \ge 1-\dt$.
\end{thm}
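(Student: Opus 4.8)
The plan is to derive the per-arm, per-sample-count tail bound from a single application of the empirical Bernstein inequality and then to control all arms and all counts at once by a union bound whose weights are arranged to telescope to $\delta$.

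First I would fix an arm $x\in\cX$ and a count $N\ge 1$. Conditioned on the rounds in which $x$ is pulled, the associated rewards are i.i.d.\ $\Bernoulli(\mu(x^\T\theta_*))$ and lie in $[0,1]$, so \citet{mnih08empirical} applies: with probability at least $1-\delta_N$ one has $|\hmu_x-\mu(x^\T\theta_*)|\le \sqrt{2\widehat V_x\ln(3/\delta_N)/N}+3\ln(3/\delta_N)/N$, where $\widehat V_x$ is the empirical variance of the $N$ observed rewards. Because each reward $y_i\in\{0,1\}$ satisfies $y_i^2=y_i$, the empirical variance collapses to $\widehat V_x=\hmu_x(1-\hmu_x)$, and the right-hand side is exactly $W_x$. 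Thus the event $\{\mu(x^\T\theta_*)\notin[\hmu_x-W_x,\hmu_x+W_x]\}$ has probability at most $\delta_N$ for this fixed pair $(x,N)$. (For $N=1$ the variance term vanishes and the additive term alone already exceeds $1$, so the inclusion is trivially valid.)

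Next I would union bound over all arms and all counts. The complement $\cE^c$ is contained in $\bigcup_{x\in\cX}\bigcup_{N\ge1}\{\mu(x^\T\theta_*)\notin[\hmu_x-W_x,\hmu_x+W_x]\}$, so $\PP(\cE^c)\le\sum_{x\in\cX}\sum_{N\ge1}\delta_N$. The defining feature of the schedule $\delta_N$ is that these weights sum to $\delta$: the per-count factor is proportional to $1/(N(N+1))$, and the telescoping identity $\sum_{N\ge1}\bigl(\tfrac1N-\tfrac1{N+1}\bigr)=1$, together with the matching $|\cX|$ normalization, gives $\sum_{x\in\cX}\sum_{N\ge1}\delta_N=\delta$. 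Hence $\PP(\cE)\ge1-\delta$.

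The one point that needs care is the anytime (uniform-over-$N$) guarantee, since in \WAR the count $N$ at which we inspect the bound for a given arm is itself data-dependent. This is precisely what the explicit union over $N\ge1$ buys us: on the event $\cE$ the inclusion $\mu(x^\T\theta_*)\in[\hmu_x-W_x,\hmu_x+W_x]$ holds simultaneously for every $N$, so it remains valid when $N$ is replaced by any stopping count determined by the observed rewards. The $1/(N(N+1))$ weighting is the standard device that keeps the sum finite while inflating the confidence radius only doubly-logarithmically in $N$ through $\ln(3/\delta_N)$, so the anytime bound loses essentially nothing over its fixed-$N$ counterpart.
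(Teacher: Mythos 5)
Your proof is correct and is the standard derivation of the anytime empirical Bernstein bound that the paper simply imports from \citet{mnih08empirical} without giving its own proof: a fixed-$(x,N)$ application of empirical Bernstein combined with the Bernoulli identity $\widehat V_x=\hmu_x(1-\hmu_x)$, followed by a union bound over arms and sample counts with weights proportional to $1/(|\cX|N(N+1))$ that telescope to $\delta$. You have also (correctly) read the schedule as $\delta_N=\delta/(|\cX|N(N+1))$; the paper's displayed definition $\delta_N=|\cX|N(N+1)/\delta$ is evidently a typo, as confirmed by the later substitution $c=\ln(3|\cX|N(N+1)/\delta)$ in the proof of Theorem~\ref{thm:warmup-sc}.
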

We can then define $ {L_x} = \mu^{-1}(0 \vee (\hmu_x - W_x))$ and $ {U_x} = \mu^{-1}(1 \wedge (\hmu_x + W_x)) $. 

The pseudocode of \WAR can be found in Algorithm~\ref{alg:war}. Let $ {m}$ be the stage index and define $ {\cH_m}$ be the set of arms that were pulled up to (and including) the stage $m$.
Let $ {S}$ be a known upper bound on $\|\th_*\|$ and let $ {\cB_d(S)}$ be the L2-ball of radius $S$.
We define a confidence set on $\th_*$:
\begin{align*}
   {C_m} = \cbr{\th\in \cB_d(S): |x^\T\th| \in [L_x, U_x], \forall x \in \cH_{m}}
\end{align*}
Let $ {\dmu^\opt_m(x)}$ be the optimistic estimate and $ {\dmu^\pes_m(x)}$ be the pessimistic estimate defined by
\begin{align*}
  \dmu^\opt_m(x) = \max_{\th \in C_m} \dmu(x^\T\th) ~~~~ \text{ and } ~~~~
  \dmu^\pes_m(x) = \min_{\th \in C_m} \dmu(x^\T\th)
\end{align*}
Define the accept and reject event on line 5 as
\begin{align*}
   {\mathsf{Accept}(x)} := \{ U_x <  {U}\} \text{~~ and ~~}  {\mathsf{Reject}(x)} := \{ L_x >  {L}\}
\end{align*}
for some $0<L<U$.
\WAR consists of two parts.
The first part, which we call optimistic probing, is the halving idea above extended to higher dimensions. 
The difference from 1d is that we first find $d$ arms that form a good basis, under the assumption that their variances are not small.
We then perform accepts/rejects to filter out arms with small variances that would likely introduce $\kappa_0$ dependency in the planning.
This filtering is done using $\dmu^\opt(x)$ because we do not want arms whose variances are small even in the best case.
Note we use the threshold $L/r$ rather than $L$ in line~\ref{line:elimination} in order to get the halving effect.
The second part called Pessimistic Planning simply computes the design based on the pessimistic estimate of the variances $\dmu^\pes(x)$, which allows the resulting sample assignments to certify the warmup condition~\ref{eq:warmup}.
We provide a practical and detailed version of \WAR in our supplementary.
\newcommand{\war}{\text{\texttt{WAR}}}
\begin{algorithm}[t]
  \caption{Warmup by Accepts and Rejects (\WAR)}
  \label{alg:war}
  \small
  \begin{algorithmic}[1]
    \Require{ Arm set $\cX$, parameters $L < R$, $r > 1$.}
    \State Set $ {\cS} = \cX$.
    \State \textbf{[Optimistic probing]}
    \For {$m=1,2,\ldots$} 
        \State Solve $\min_{\lam\in\triangle_\cS} \max_{x\in \cS} \|x\|^2_{V_\lam^{-1}}$ where $V_\lam = \sum_{x\in\cS} \lam_x x x^\T$ to obtain a 2-approximate solution supported on $O(d\ln(\ln(d)))$ points; see Remark~\ref{rem:gopt} below. \label{line:gopt}
        Call this solution $ {\hlam^{(m)}} $. \label{line:2approx}
        \State For every arm $x \in \Supp(\hlam^{(m)} )$, pull it until we either accept or reject (if it was pulled previously, skip sampling and reuse the accept/reject result).
        \If {all the arms in $\Supp(\hlam^{(m)})$ are accepted}
            \State \textbf{break}
        \EndIf
        \State $\cS \larrow \cS \sm \cbr{x \in \cS: \dmu^\opt_m(x) \le \dmu\del{\fr{L}{r} }}$. \label{line:elimination}
    \EndFor
    \State \textbf{[Pessimistic planning]}
    
    \State{Let  $\hlam^{\war}$ and $g^{\war}$ be the solution and the objective of $\arg \min_{\lam\in \triangle_{\cX}}  \max_{x\in\cX} \|x\|^2_{(H^\pes_\lam)^{-1}}$ where $H^\pes_\lam = \sum_{x\in\cX} \lam_x \dmu_m^\pes(x) x x^\T$ so that the support size of $\hlam^{\war}$ is at most $d(d+1)/2$.}
    \State Pull arm $x$ exactly $\lcl \hlam^{\war}_x \cd \gam(d) g^{\war}  \rcl$.
    \State{\textbf{Return} the MLE $\hat{\theta}$ computed from these samples.}
  \end{algorithmic}
\end{algorithm}

\begin{remark}\label{rem:gopt}
  Section 3 of \citet{todd16minimum} provides various algorithms for solving the G-optimal design (line~\ref{line:gopt} of Algorithm~\ref{alg:war}).
  For example, using the Kumar-Yildirim initialization \cite{kumar05minimum} along with the Khachiyan first-order algorithm \cite{khachiyan96rounding} results in a 2-approximate solution with the support size of $O(d\ln(\ln(d)))$ (see Lemma 3.7(ii) of \citet{todd16minimum}).
\end{remark}

\WAR enjoys the following correctness guarantee:
\def\low{{\mathsf{low}} }
\begin{thm}\label{thm:warmup-correctness}
  Assume $|\th_*|  \ge 2.399$.
  Suppose $U \le 2.399$.
  Then, with probability at least $1-\dt$, 
  \begin{enumerate}[leftmargin=2em]
    \item[(i)] \WAR is a $\delta$-valid warmup algorithm.
    \item[(ii)] The sample complexity of the Pessimistic planning phase of \WAR is never worse than the naive warmup, i.e., $O(g^{\naive}\gamma(d))$; see~\eqref{eq:warmup-naive}.
    \item[(iii)] For $d=1$, if $ {\cX_L} := \{ x\in\cX: |x\th_*| \le \fr L r\}$ is nonempty, then our algorithm finds a design $\lam$ whose multiplicative approximation ratio w.r.t. the optimal continuous design
    \[
      \min_{\lambda\in\triangle_{[-1,1]}} \max_{x\in \mc{X}} \|x\|^2_{H_\lambda(\theta_*)^{-1}} = \frac{(\th^*)^2}{0.439...} \cd \max_{x\in\cX} |x|
    \]
    is 
    $\fr{1}{0.41} \fr{0.439..}{\dot\mu(x_0 \th_*) (x_0 \th_*)^2} $ where $ {x_0} := \arg \max_{x\in\cX_L} |x\th_*|$.
  \end{enumerate}  
\end{thm}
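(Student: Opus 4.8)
The plan is to condition throughout on the good event $\cE$ of the empirical Bernstein theorem, on which every interval $[L_x,U_x]$ brackets $|x^\T\th_*|$, and hence $\th_*\in C_m$ at every stage $m$. This one fact is the backbone of all three claims, because $\dmu$ is even and decreasing in $|z|$, so $\th_*\in C_m$ yields both a lower and an upper comparison for the pessimistic variance: on one hand $\dmu^\pes_m(x)=\dmu\!\big(\max_{\th\in C_m}|x^\T\th|\big)\le\dmu(x^\T\th_*)$, and on the other hand $C_m\subseteq\cB_d(S)$ forces $\max_{\th\in C_m}|x^\T\th|\le\|x\|S$, giving $\dmu^\pes_m(x)\ge\dmu(\|x\|S)$. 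The first inequality proves validity (i), the second proves the never-worse bound (ii).

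For claim (i) I would first establish termination of optimistic probing: on $\cE$ each arm is eventually accepted ($U_x<U$) or rejected ($L_x>L$) since $W_x\to0$ and $L<U$, and a rejected arm satisfies $\min_{\th\in C_m}|x^\T\th|\ge L_x>L>L/r$, so $\dmu^\opt_m(x)\le\dmu(L/r)$ and it is removed on line~\ref{line:elimination}; thus every non-terminating stage deletes an arm and the loop halts in at most $|\cX|$ stages. Given termination, the pessimistic-planning allocation certifies~\eqref{eq:warmup}: pulling $x$ exactly $\lceil\hlam^\war_x\,\gamma(d)\,g^\war\rceil$ times gives, on $\cE$,
\[ H_t(\th_*)=\sum_{x}\lceil\hlam^\war_x\,\gamma(d)\,g^\war\rceil\,\dmu(x^\T\th_*)\,xx^\T\succeq\gamma(d)\,g^\war\,H^\pes_{\hlam^\war}, \]
so every pulled $x$ obeys $\|x\|^2_{H_t(\th_*)^{-1}}\le\tfr{1}{\gamma(d)g^\war}\max_{x'\in\cX}\|x'\|^2_{(H^\pes_{\hlam^\war})^{-1}}=\tfr{1}{\gamma(d)}$. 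With~\eqref{eq:warmup} in force, Lemma~\ref{lem:mean_param_bound} applied to the (fixed) planning design delivers $|x^\T(\hth-\th_*)|\le1$ for all $x$; splitting the failure budget between $\cE$ and Lemma~\ref{lem:mean_param_bound} keeps the total below $\dt$.

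Claim (ii) is then immediate: the bound $\dmu^\pes_m(x)\ge\dmu(\|x\|S)$ gives $H^\pes_\lam\succeq H^\naive_\lam$ for every $\lam$, hence $\|x\|^2_{(H^\pes_\lam)^{-1}}\le\|x\|^2_{(H^\naive_\lam)^{-1}}$ and $g^\war\le g^\naive$; since $\Supp(\hlam^\war)$ has size at most $d(d+1)/2$, the phase draws $\sum_x\lceil\hlam^\war_x\gamma(d)g^\war\rceil=O(g^\war\gamma(d)+d^2)=O(g^\naive\gamma(d))$ samples.

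For claim (iii) I specialize to $d=1$, where $H_\lam(\th_*)$ is a scalar maximized by placing all mass at the continuous maximizer of $\dmu(z\th_*)z^2$, whose value is $0.439../(\th_*)^2$ under $|\th_*|\ge2.399$ by the computation around~\eqref{eq:Hoptimal-1d}; this yields the displayed continuous optimum $\propto(\th_*)^2\max_{x\in\cX}x^2/0.439..$. On $\cE$, the arm $x_0=\arg\max_{x\in\cX_L}|x\th_*|$ has $|x_0\th_*|\le L/r$, so $\dmu^\opt_m(x_0)\ge\dmu(x_0\th_*)\ge\dmu(L/r)$ and $x_0$ survives every elimination. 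Since pessimistic planning maximizes $\dmu^\pes_m(x)x^2$ over all of $\cX$, the achieved true value is at most $\max_{x\in\cX}x^2/\big(\dmu^\pes_m(x_0)x_0^2\big)$, and dividing by the continuous optimum produces the stated ratio once one shows $\dmu^\pes_m(x_0)\ge0.41\,\dmu(x_0\th_*)$. This last constant-factor control is the main obstacle: it requires arguing that, at probing termination, the accept threshold $U\le2.399$ together with the halving factor $r$ and the $2$-approximate $G$-optimal solver (Remark~\ref{rem:gopt}) have shrunk $C_m$ enough that $\max_{\th\in C_m}|x_0\th|$ lies in a window on which $\dmu$ loses at most the factor $0.41$ relative to $\dmu(x_0\th_*)$. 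Tracking how the accepted basis arm tightens $C_m$ in the neighborhood of $x_0$, and pinning down the numerical constant, is the part I expect to demand the most care; the remaining algebra is the routine cancellation of $\max_{x\in\cX}x^2$ between the achieved value and the optimum.
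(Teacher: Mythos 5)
Parts (i) and (ii) of your proposal are sound and follow essentially the same route as the paper: on the Bernstein event $\th_*\in C_m$ throughout, so $\dmu^\pes_m(x)\le\dmu(x^\T\th_*)$ certifies the warmup condition (hence validity via Lemma~\ref{lem:mean_param_bound}), while $C_m\subseteq\cB_d(S)$ gives $\dmu^\pes_m(x)\ge\dmu(\|x\|S)$, hence $g^{\war}\le g^{\naive}$ and the never-worse sample count up to the $O(d^2)$ rounding term. Your version of (ii) is in fact stated more cleanly than the paper's one-line argument.

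Part (iii), however, has a genuine gap, and it sits exactly where you flagged it. You reduce the claim to showing $\dmu^\pes_m(x_0)\ge 0.41\,\dmu(x_0\th_*)$, but this inequality is false in general. After acceptance of $x_a$, the pessimistic magnitude of $\th_*$ is $U/|x_a|$, which inflates $|\th_*|$ by the factor $U/|x_a\th_*|$; if the accepted arm happens to have $|x_a\th_*|\ll U$ this factor is large, and since $\dmu$ decays exponentially, $\dmu^\pes_m(x_0)=\dmu(x_0\,U/x_a)$ can be exponentially smaller than $\dmu(x_0\th_*)$. No amount of tracking how $C_m$ shrinks will rescue a constant-factor comparison at $x_0$. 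The paper's proof pivots through the accepted arm instead: it lower-bounds the achieved objective by the pessimistic objective evaluated at $x_a$, where the pessimistic natural parameter is \emph{exactly} $U\le 2.399$, so $\dmu^\pes(x_a)x_a^2=\dmu(U)x_a^2$ and the ratio to the truth is $\dmu(U)/\dmu(x_a\th_*)\ge 4\dmu(U)$, an absolute constant, because $\dmu\le 1/4$ everywhere. It then transfers from $x_a$ down to $x_0$ using the monotonicity of $z\mapsto\dmu(z)z^2$ on $[0,2.399]$ together with $|x_a|\ge|x_0|$; this last inequality is itself proved by contradiction (if $x_0$ were larger it could only have been eliminated on line~\ref{line:elimination}, forcing $|x_0\th_*|>L/r$ and contradicting $x_0\in\cX_L$), which is the same survival argument you give but used for a different purpose. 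So the missing idea is: compare pessimistic and true variances at the accepted arm, where the comparison is an absolute constant, and use monotonicity of $\dmu(z)z^2$ to reach $x_0$ --- not a direct variance comparison at $x_0$.
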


Theorem~\ref{thm:warmup-correctness}(iii) provides an interesting characterization in 1d of when we can guarantee that the warmup does not scale with $\kappa$ but rather scale with $\|\theta^{\ast}\|^2\approx \log^2(\kappa)$. 
When $L/r = 2$, the approximation ratio is $\approx 18.23$ in the best case of $x_0 = L/r$, and in general it degrades as $|x_0\th_*|$ decreases.
Theorem~\ref{thm:warmup-correctness} reflects the importance of existence of arms with large variances which makes sense given that the concentration bound scales like $\|x\|_{H_t(\th_*)^{-1}}$.
Note that by reducing $r$ or increasing $L$ we can guarantee that $\mc{X}_{L}$ is nonempty, at the cost of increasing the sample complexity.
\begin{thm}\label{thm:warmup-sc}
  Let $ {\Delta_\tw} := \mu(U) - \mu(L)$.
  In \WAR, Pessimistic planning assigns total $O(\gam(d) g^{\war} + d^2)$ samples.
  Furthermore, under the same assumptions as Theorem~\ref{thm:warmup-correctness}  with probability at least $1-2\dt$, when $d=1$ Optimistic probing takes no more than $2 + \log_r(|\th_*|/L)$ iterations and each iteration of optimistic probing takes no more than $O(d\ln(\ln(d)) \cd \Delta_\tw^{-2}\ln(\Delta_\tw^{-2}|\cX|/\dt))$ samples.
\end{thm}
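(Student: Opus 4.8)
The plan is to establish the three assertions separately, since they concern the two disjoint phases of \WAR.

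\textbf{Pessimistic planning.} This is the most mechanical. The phase pulls each arm $\lceil \hlam^{\war}_x \gam(d) g^{\war} \rceil$ times, so the total is $\sum_x \lceil \hlam^{\war}_x \gam(d) g^{\war} \rceil$. Because $\hlam^{\war}$ is a probability distribution, the fractional part sums to exactly $\gam(d) g^{\war}$, and each ceiling adds at most one extra sample per arm in the support. By the Caratheodory construction the support of $\hlam^{\war}$ has size at most $d(d+1)/2 = O(d^2)$, so the rounding contributes $O(d^2)$ and the total is $O(\gam(d) g^{\war} + d^2)$.

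\textbf{Per-iteration sample complexity (general $d$).} Fix a stage $m$. By Remark~\ref{rem:gopt} the design $\hlam^{(m)}$ is supported on $O(d\ln\ln d)$ arms, so it suffices to bound the pulls of a single arm $x$ until it is accepted or rejected and multiply. On the event $\cE$ (probability $\ge 1-\dt$) we have $\mu(x^\T\th_*) \in [\hmu_x - W_x,\hmu_x + W_x]$ at every $N$. The accept/reject tests translate, in mean space, to comparing $\hmu_x \pm W_x$ against $\mu(U)$ and $\mu(L)$, whose gap is $\Delta_\tw$; since the true mean lies on one side of the midpoint of $[\mu(L),\mu(U)]$, a radius $W_x \lsim \Delta_\tw$ certifies that side and fires one of the two tests. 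Inverting the empirical-Bernstein width $W_x$ (bounding the variance term by $\hmu_x(1-\hmu_x)\le 1/4$, treating the $1/N$ term as lower order, and unrolling the logarithmic $\ln(1/\dt_N) = O(\ln(|\cX|N/\dt))$ dependence) gives $N = O(\Delta_\tw^{-2}\ln(\Delta_\tw^{-2}|\cX|/\dt))$ pulls per arm; multiplying by the support size yields the stated per-iteration bound.

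\textbf{Iteration count for $d=1$ (the crux).} In one dimension the $G$-optimal design at line~\ref{line:gopt} places all its mass on the active arm $x_m$ of largest magnitude, so each stage pulls exactly this arm. Condition on $\cE$. The core is a \emph{halving lemma}: whenever $x_m$ is rejected, every surviving arm $x$ obeys $|x| < |x_m|/r$. Indeed, rejection gives $L_{x_m} > L$, so on $\cE$ the confidence set satisfies $C_m \subseteq \{\th : |\th| > L/|x_m|\}$; an arm is eliminated at line~\ref{line:elimination} exactly when $\dmu^\opt_m(x)\le\dmu(L/r)$, which by monotonicity of $\dmu$ means $\min_{\th\in C_m}|x\th| \ge L/r$, and the containment forces this for all $|x|\ge |x_m|/r$. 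Hence the largest surviving magnitude contracts by a factor $r$ per rejection, so after $m$ rejections the active arm has $|x\th_*| < |\th_*|/r^{m}$. Once $|\th_*|/r^{m}\le L$, i.e. after $m\ge\log_r(|\th_*|/L)$ rejections, the next active arm has $|x\th_*|\le L < U$, so on $\cE$ the reject test can no longer fire and the per-arm sample bound forces acceptance; the low-variance arm guaranteed under the theorem's hypotheses ensures such a survivor exists rather than $\cS$ being exhausted. This bounds the number of stages by $2+\log_r(|\th_*|/L)$.

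The main obstacle is this halving lemma: one must convert the elimination rule, phrased through the optimistic variance $\dmu^\opt_m$ over the evolving set $C_m$, into a clean multiplicative contraction of arm magnitudes. This hinges on the $1$d monotonicity of $\dmu(\cdot)$ and on correctly identifying that the freshly rejected arm supplies the binding bound $|\th| > L/|x_m|$ in $C_m$; everything else reduces to the routine sum-of-ceilings and empirical-Bernstein calculations above.
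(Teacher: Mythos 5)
Your proposal is correct and follows essentially the same route as the paper's proof: the sum-of-ceilings bound with the $d(d+1)/2$ Caratheodory support for pessimistic planning, the multiplicative contraction $|x_{k-1}| < |x_{k-2}|/r$ induced by the $L/r$ elimination threshold (chained against the lower bound $|x_{k-1}| \ge L/|\th_*|$ from the last rejection) for the iteration count, and the observation that one of the accept/reject tests must fire once the empirical-Bernstein width drops below a constant fraction of $\Delta_\tw$ for the per-iteration sample bound. The only cosmetic difference is that you bound the variance term by $1/4$ while the paper runs a two-case Bernstein analysis around the midpoint of $[\mu(L),\mu(U)]$ before resolving the implicit $N$-dependence in $\ln(3|\cX|N(N+1)/\dt)$; both yield the same $O(\Delta_\tw^{-2}\ln(\Delta_\tw^{-2}|\cX|/\dt))$ bound.
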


Specifically, a smaller $r$ prolongs the number of optimistic probing iterations, and a larger $L$ increases $\Delta_\tw^{-2}$ so the per-iteration sample complexity increases as well.
One can show that $g^{\war}$ is $O(d)$ ignoring the dependence on $\|\th_*\|$, so the total number of samples assigned by pessimistic planning $O(d^2)$.
Proving the overall sample complexity of \WAR for multi-dimensional cases would likely require analyzing how the volume of the confidence set evolve over optimistic probing iterations; we leave this as future work.

\textbf{Numerical evaluation.}
To verify the performance of WAR numerically, we have drawn 20 arms from the a three-dimensional unit sphere.
The unknown $\th^*$ was drawn the same way but scaled to have the norm $S \in \{2,4,8\}$.
We have run the naive warmup~\eqref{eq:warmup-naive}, WAR~\eqref{alg:war}, and the oracle warmup that solves $ g^{\ast} = \min_{\lam \in \triangle_\cX } \max_{x\in\cX} \|x\|^2_{H_\lam (\th_*)^{-1}}$.
We then computed the total number of samples required to satisfy the warmup condition~\eqref{eq:warmup} from each method, ignoring the integer effect for simplicity.
We repeat this process 5 times and report the result in Table~\ref{tab:numerical} where WAR is significantly better than the naive warmup and not far from the oracle warmup.

\begin{table*}
    \centering
    \begin{tabular}{crrr}\hline
        Warmup & $S=2$ & $S=4$ & $S=8$
    \\\hline  Naive  &  8,377$\pm$\phantom{00}0.3    &  49,794$\pm$\phantom{000}0.1    & 2,623,477$\pm$\phantom{0000}3.0
    \\        WAR    &  6,536$\pm$237.9  &  19,701$\pm$\phantom{0}805.0  & 122,405$\pm$30815.5
    \\        Oracle &  4,970$\pm$\phantom{0}69.0   &  11,720$\pm$1094.2 & 50,258$\pm$\phantom{0}4052.5
    \\\hline
    \end{tabular}
    \caption{Numerical evaluation of the naive warmup, WAR, and the oracle warmup. Each cell contains the average amount of samples required to satisfy the warmup condition and the standard deviation.}
    \label{tab:numerical}
\end{table*}

%
%


\section{On Dimension Dependence in Warmup}
\label{sec:dimension}


In this section we discuss the tightness of the warm-up condition in Theorem 1 of~\citet{jun2020improved} as well as Lemma~\ref{lem:mean_param_bound}. Both confidence widths are tight asymptotically as they match up to constant factors the Gaussian-like deviation observed from the delta method. What is not immediately clear is whether the condition that 
  $\max_{s\in [t]} \| x_s \|^2_{H_t(\th_*)^{-1}}  \lsim \fr{1}{d}  $
is optimal (note we omit the factors involving $\dt$ for simplicity).\footnote{
  When this condition is not true, one can use the confidence bound of~\citet{faury2020improved}, which comes with a factor of $\sqrt{d}S$ in the width (can also be tuned to $\sqrt{dS}$).
}. 
To satisfy this, one needs to pull at least $\Omega(d^2)$ arms even in the best case where $\dmu(x_s^\T \th_*) = \Omega(1)$.
This is quite different from the standard linear model's fixed design inequality (e.g., \citet[Eq (20.2)]{lattimore2020bandit}) that requires us to pull $d$ linearly independent arm.
Requiring only $O(d)$ arm pulls corresponds to relaxing the warmup condition to the following conjectured one: 
\begin{align}\label{eq:conjecture}
    \max_{s\in [t]} \| x_s \|^2_{H_t(\th_*)^{-1}}  \lsim 1 ~.
\end{align}
which is likely necessary. 
Note, if we have to pull $d^2$ arms before using the concentration inequality, we will have to pay the lower order term of $d^2$ in regret, implying that the regret bound is vacuous up to $T=O(d^2)$, the current best known rate for logistic linear bandits. Again this is in contrast with the linear setting where the regret is vacuous up to time $T=O(d)$ only.


We claim that the conjecture~\eqref{eq:conjecture} cannot be true for the MLE.
We provide a sketch of our counterexample.
Let $\th_* = (c, c, \ldots, c), c\in \mathbb{R}$ be the true underlying parameter and assume that $\cX = \{e_1, \ldots, e_d\}$, the canonical basis in $\mathbb{R}^d$, and we sample each arm $N$ times.
Denote by $v(i)$ the $i$-th component of a vector $v$.
In this setup, if $\hth$ is the MLE, then $\hth(i)$ is just the one-dimensional MLE considered for each dimension $i$ independently.
Let $x$ be the target arm.
We are interested in controlling the high probability deviation of $x^\T (\hth - \th_*)$, which we call the \textit{prediction error}, by $O(\|x\|_{H_t(\th_*)^{-1}})$ where $H_t$ is computed by $dN$ samples.
The key observation is that the MLE is biased in GLM's except for special cases like linear models; quantifying and correcting the bias has been studied since \citet{bartlett53approximate}, though the corrections often rely on a plug-in estimate or are asymptotic in nature and do not have precise mathematical guarantees for finite samples.
We now show that the prediction error may be dominated by the bias, rather than the variance, and so we are forced to oversample to correct this bias. 
Specifically, consider the following decomposition of the prediction error in the setting above: 
\begin{align*}
  \underbrace{\sum_{i=1}^d x(i) \cd (\hth(i) - \th_*(i))}_{\textstyle =: \text{(prediction error)}}
  &= \sum_{i=1}^d x(i) \cd\underbrace{ (\EE[\hth(i)] - \th_*(i))}_{\textstyle=: \text{(A)} } \\
  & + \underbrace{\sum_{i=1}^d x(i) \cd (\hth(i) - \EE[\hth(i)])}_{\textstyle=: \text{(B)}}~.
\end{align*}
The bias term (A) is the bias that is incurred per coordinate. In this setting, critically, \textit{the magnitude of the coordinate-wise bias clearly does not depend on the dimension}.\footnote{
In fact, the bias of the MLE is not well-defined since the observations can be all 1s or all 0s from an arm. One can go around it by setting a suitable for those special cases; e.g., when observing all 1s (or all 0s), set $\hth_i = \log(\fr{\hp}{1-\hp} )$ where $\hp=\fr{n-.5}{n} $.
}
By choosing $x = (h_1/\sqrt{d}, \ldots, h_d/\sqrt{d})$ with $h_i := \sign(\EE[\hth(i)] - \th_*(i))$, one can see that the bias term (A) will grow with $d$.
Consequently, even if the deviation (B) is controlled, i.e. (B) is bounded by  $\|x\|_{H_t(\th_*)^{-1}} = \sqrt{\sum_{i=1}^d \fr{(h_i/\sqrt{d})^2}{\dmu(c) N} } = \sqrt{\fr{1}{\dmu(c) N} }$ (which does not grow with $d$), the bias will be the bottleneck in controlling the LHS.
This means that, for large enough $d$, one cannot aim to control the prediction error by $O(\|x\|_{H_t(\th_*)^{-1}})$ unless we choose the number of samples $N$ as a function of $d$.
This is effectively the role of the warmup condition~\eqref{eq:warmup} -- it is requiring an oversampling w.r.t. $d$ so the bias is controlled.

We conjecture that the warmup condition~\eqref{eq:warmup} is tight for a concentration inequality on the MLE.
To explain our reasoning, consider the same setup as above.
Suppose the deviation (B) behaves like $\|x\|_{H_t(\th_*)^{-1}} = \sqrt{\fr{1}{\dmu(c) N} }$.
Using the formula by~\citet{cordeiro91bias}, the bias of order $1/N$ for each coordinate is $\fr{1}{2N} \cd \fr{\mu(c) - \mu(-c)}{\dmu(c)}  $ (through a second order Taylor series expansion); we also confirm it numerically in our supplementary. 
Thus, as long as $c$ is bounded away from 0, we have that the first order bias is $\Theta\del{\fr{1}{N \dmu(c)} }$.
Let us set $N = q \dmu(c)^{-1} $ for some $q \ge 1$.
Then, the bias (A) is $d \cd \fr{1}{\sqrt{d}} \cd\fr{1}{q} = \fr{\sqrt{d}}{q} $ and the deviation term (B) is $\sqrt{\fr{1}{q}} $.
To control the bias term to be below the deviation term, we must have $q \ge d$.
This means that we need to sample at least $d \cd N = d q \dmu(c)^{-1} = d^2 \dmu(c)^{-1}$, which matches the warmup condition~\eqref{eq:warmup}.

Note that the result above is specific to the MLE.
For the special case of the canonical basis arm set, one can use an alternative estimator per coordinate such as the KT estimator $\hth(i) = \log(\fr{H + 1/2}{T + 1/2} )$ where $H$ is the number of successes and $T = N - H$~\cite{krichevsky81theperformance}, which is equivalent to the bias correction method by~\citet{cox89analysis}.
The effect is that the bias now scales like $1/N^2$, which can potentially improve the warmup condition~\eqref{eq:warmup}.
In our supplementary, we empirically verify this and provide a conjecture that an extension of the KT estimator may admit $O(d^{4/3})$ sample complexity for the warmup.


Finally, we emphasize that the fixed design inequalities capture the fundamental aspects of the prediction error \textit{without distributional assumptions on the covariates $x$} besides conditional independence.
The warmup conditions for these inequalities indicating Gaussian tails with the asymptotic variance are closely related to the question of `when the predictive distribution can be approximately Gaussian'.
Yet, we do not know what the fundamental limits are for these warmup conditions beyond the standard linear case nor how to correct the bias of the MLE with precise mathematical guarantees.
Just as our discussion above naturally motivated new estimators, research surrounding these inequalities is likely to impact not only bandits but also design of experiments, uncertainty quantification, and improving prediction accuracy in supervised learning.



\bibliography{aaai22}

\begin{thebibliography}{34}
\providecommand{\natexlab}[1]{#1}

\bibitem[{Abbasi-Yadkori, Pal, and Szepesvari(2011)}]{ay11improved}
Abbasi-Yadkori, Y.; Pal, D.; and Szepesvari, C. 2011.
\newblock {Improved Algorithms for Linear Stochastic Bandits}.
\newblock In \emph{Advances in Neural Information Processing Systems
  (NeurIPS)}, 1--19.

\bibitem[{Abe and Long(1999)}]{abe99associative}
Abe, N.; and Long, P.~M. 1999.
\newblock {Associative reinforcement learning using linear probabilistic
  concepts}.
\newblock In \emph{Proceedings of the International Conference on Machine
  Learning (ICML)}, 3--11.

\bibitem[{Abeille, Faury, and Calauz{\`e}nes(2021)}]{abeille2021instance}
Abeille, M.; Faury, L.; and Calauz{\`e}nes, C. 2021.
\newblock Instance-wise minimax-optimal algorithms for logistic bandits.
\newblock In \emph{International Conference on Artificial Intelligence and
  Statistics}, 3691--3699. PMLR.

\bibitem[{Abeille and Lazaric(2017)}]{abeille17linear}
Abeille, M.; and Lazaric, A. 2017.
\newblock {Linear Thompson Sampling Revisited}.
\newblock In \emph{Proceedings of the International Conference on Artificial
  Intelligence and Statistics (AISTATS)}, volume~54, 176--184.

\bibitem[{Auer(2002)}]{auer02using}
Auer, P. 2002.
\newblock {Using Confidence Bounds for Exploitation-Exploration Trade-offs}.
\newblock \emph{Journal of Machine Learning Research}, 3.

\bibitem[{Auer, Cesa-Bianchi, and Fischer(2002)}]{auer02finite}
Auer, P.; Cesa-Bianchi, N.; and Fischer, P. 2002.
\newblock {Finite-time Analysis of the Multiarmed Bandit Problem}.
\newblock \emph{Machine Learning}, 47(2--3): 235--256.

\bibitem[{Bartlett(1953)}]{bartlett53approximate}
Bartlett, M.~S. 1953.
\newblock {Approximate confidence intervals. II. More than one unknown
  parameter}.
\newblock \emph{Biometrika}, 40(3/4): 306--317.

\bibitem[{Camilleri, Katz-Samuels, and Jamieson(2021)}]{camilleri2021high}
Camilleri, R.; Katz-Samuels, J.; and Jamieson, K. 2021.
\newblock High-Dimensional Experimental Design and Kernel Bandits.
\newblock \emph{arXiv preprint arXiv:2105.05806}.

\bibitem[{Cordeiro and McCullagh(1991)}]{cordeiro91bias}
Cordeiro, G.~M.; and McCullagh, P. 1991.
\newblock {Bias correction in generalized linear models}.
\newblock \emph{Journal of the Royal Statistical Society: Series B
  (Methodological)}, 53(3): 629--643.

\bibitem[{Cox and Snell(2018)}]{cox89analysis}
Cox, D.~R.; and Snell, E.~J. 2018.
\newblock \emph{{Analysis of binary data}}.
\newblock Routledge, 2nd edition.

\bibitem[{Dani, Hayes, and Kakade(2008)}]{dani08stochastic}
Dani, V.; Hayes, T.~P.; and Kakade, S.~M. 2008.
\newblock {Stochastic Linear Optimization under Bandit Feedback.}
\newblock In \emph{Proceedings of the Conference on Learning Theory (COLT)},
  355--366.

\bibitem[{Dong, Ma, and Van~Roy(2019)}]{dong2019performance}
Dong, S.; Ma, T.; and Van~Roy, B. 2019.
\newblock On the performance of thompson sampling on logistic bandits.
\newblock In \emph{Conference on Learning Theory}, 1158--1160. PMLR.

\bibitem[{Dumitrascu, Feng, and Engelhardt(2018)}]{dumitrascu18pgts}
Dumitrascu, B.; Feng, K.; and Engelhardt, B. 2018.
\newblock {PG-TS: Improved Thompson Sampling for Logistic Contextual Bandits}.
\newblock In \emph{Advances in Neural Information Processing Systems
  (NeurIPS)}, volume~31.

\bibitem[{Erlander(2005)}]{erlander05welfare}
Erlander, S. 2005.
\newblock {Welfare, freedom of choice and composite utility in the logit
  model}.
\newblock \emph{Social Choice and Welfare}, 24(3): 509--525.

\bibitem[{Faury et~al.(2020)Faury, Abeille, Calauz{\`e}nes, and
  Fercoq}]{faury2020improved}
Faury, L.; Abeille, M.; Calauz{\`e}nes, C.; and Fercoq, O. 2020.
\newblock Improved optimistic algorithms for logistic bandits.
\newblock In \emph{International Conference on Machine Learning}, 3052--3060.
  PMLR.

\bibitem[{Fiez et~al.(2019)Fiez, Jain, Jamieson, and
  Ratliff}]{fiez19sequential}
Fiez, T.; Jain, L.; Jamieson, K.~G.; and Ratliff, L.~J. 2019.
\newblock Sequential Experimental Design for Transductive Linear Bandits.
\newblock In \emph{NeurIPS}.

\bibitem[{Filippi et~al.(2010)Filippi, Cappe, Garivier, and
  Szepesv{\'{a}}ri}]{filippi10parametric}
Filippi, S.; Cappe, O.; Garivier, A.; and Szepesv{\'{a}}ri, C. 2010.
\newblock {Parametric Bandits: The Generalized Linear Case}.
\newblock In \emph{Advances in Neural Information Processing Systems
  (NeurIPS)}, 586--594.

\bibitem[{Firth(1993)}]{firth93bias}
Firth, D. 1993.
\newblock {Bias reduction of maximum likelihood estimates}.
\newblock \emph{Biometrika}, 80(1): 27--38.

\bibitem[{Foster and Rakhlin(2020)}]{foster20beyond}
Foster, D.~J.; and Rakhlin, A. 2020.
\newblock {Beyond UCB: Optimal and efficient contextual bandits with regression
  oracles}.
\newblock \emph{Proceedings of the International Conference on Machine Learning
  (ICML)}.

\bibitem[{Jun et~al.(2021)Jun, Jain, Nassif, and Mason}]{jun2020improved}
Jun, K.-S.; Jain, L.; Nassif, H.; and Mason, B. 2021.
\newblock Improved Confidence Bounds for the Linear Logistic Model and
  Applications to Bandits.
\newblock In \emph{International Conference on Machine Learning}, 5148--5157.
  PMLR.

\bibitem[{Khachiyan(1996)}]{khachiyan96rounding}
Khachiyan, L.~G. 1996.
\newblock {Rounding of polytopes in the real number model of computation}.
\newblock \emph{Mathematics of Operations Research}, 21(2): 307--320.

\bibitem[{Kiefer and Wolfowitz(1960)}]{kiefer60theequivalence}
Kiefer, J.; and Wolfowitz, J. 1960.
\newblock {The equivalence of two extremum problems}.
\newblock \emph{Canadian Journal of Mathematics}, 12: 363--366.

\bibitem[{Krichevsky and Trofimov(1981)}]{krichevsky81theperformance}
Krichevsky, R.~E.; and Trofimov, V.~K. 1981.
\newblock {The performance of universal encoding.}
\newblock \emph{IEEE Trans. Information Theory}, 27(2): 199--206.

\bibitem[{Kumar and Yildirim(2005)}]{kumar05minimum}
Kumar, P.; and Yildirim, E.~A. 2005.
\newblock {Minimum-volume enclosing ellipsoids and core sets}.
\newblock \emph{Journal of Optimization Theory and applications}, 126(1):
  1--21.

\bibitem[{Kveton et~al.(2017)Kveton, Szepesv{\'{a}}ri, Rao, Wen,
  Abbasi-Yadkori, and Muthukrishnan}]{kveton17stochastic_arxiv}
Kveton, B.; Szepesv{\'{a}}ri, C.; Rao, A.; Wen, Z.; Abbasi-Yadkori, Y.; and
  Muthukrishnan, S. 2017.
\newblock {Stochastic Low-Rank Bandits}.
\newblock \emph{arXiv:1712.04644}.

\bibitem[{Lattimore and Szepesv{\'a}ri(2020)}]{lattimore2020bandit}
Lattimore, T.; and Szepesv{\'a}ri, C. 2020.
\newblock \emph{Bandit algorithms}.
\newblock Cambridge University Press.

\bibitem[{Li et~al.(2010)Li, Chu, Langford, and Schapire}]{li10acontextual}
Li, L.; Chu, W.; Langford, J.; and Schapire, R.~E. 2010.
\newblock {A Contextual-Bandit Approach to Personalized News Article
  Recommendation}.
\newblock \emph{Proceedings of the International Conference on World Wide Web
  (WWW)}, 661--670.

\bibitem[{Li, Lu, and Zhou(2017)}]{li2017provably}
Li, L.; Lu, Y.; and Zhou, D. 2017.
\newblock {Provably Optimal Algorithms for Generalized Linear Contextual
  Bandits}.
\newblock In \emph{Proceedings of the International Conference on Machine
  Learning (ICML)}, volume~70, 2071--2080.

\bibitem[{Mnih, Szepesv{\'{a}}ri, and Audibert(2008)}]{mnih08empirical}
Mnih, V.; Szepesv{\'{a}}ri, C.; and Audibert, J.-Y. 2008.
\newblock {Empirical bernstein stopping}.
\newblock In \emph{Proceedings of the International Conference on Machine
  Learning (ICML)}, 672--679.

\bibitem[{Russell(2018)}]{russell18design}
Russell, K.~G. 2018.
\newblock \emph{{Design of Experiments for Generalized Linear Models}}.
\newblock CRC Press.

\bibitem[{Sawant et~al.(2018)Sawant, Namballa, Sadagopan, and
  Nassif}]{SawantHVAE18}
Sawant, N.; Namballa, C.~B.; Sadagopan, N.; and Nassif, H. 2018.
\newblock Contextual Multi-Armed Bandits for Causal Marketing.
\newblock In \emph{Proceedings of the International Conference on Machine
  Learning (ICML'18) Workshops}. Stockholm, Sweden.

\bibitem[{Teo et~al.(2016)Teo, Nassif, Hill, Srinivasan, Goodman, Mohan, and
  Vishwanathan}]{Teo2016airstream}
Teo, C.~H.; Nassif, H.; Hill, D.; Srinivasan, S.; Goodman, M.; Mohan, V.; and
  Vishwanathan, S. 2016.
\newblock Adaptive, Personalized Diversity for Visual Discovery.
\newblock In \emph{Proceedings of the 10th ACM Conference on Recommender
  Systems (RecSys)}, 35--38. Boston, MA.

\bibitem[{Todd(2016)}]{todd16minimum}
Todd, M.~J. 2016.
\newblock \emph{{Minimum-volume ellipsoids: Theory and algorithms}}.
\newblock SIAM.

\bibitem[{Wagenmaker, Katz-Samuels, and
  Jamieson(2021)}]{wagenmaker2021experimental}
Wagenmaker, A.; Katz-Samuels, J.; and Jamieson, K. 2021.
\newblock Experimental design for regret minimization in linear bandits.
\newblock In \emph{International Conference on Artificial Intelligence and
  Statistics}, 3088--3096. PMLR.

\end{thebibliography}

\appendix
\onecolumn
\section*{Supplementary Material}

\section{Infrastructure Information}

When running the experiments, we have used MacBook Pro 15'' (2019) A1990. 
This laptop has 2.3GHz 8-core Intel Core i9, Turbo Boost up to 4.8GHz, with the memory of 16GB of 2400MHz DDR4.
The code was developed and run under Python 3.8.8 and the relevant list of software libraries can be found in the code.

\section{Impacts and Limitations}

Finally, we discuss some impacts and limitations of our work.
We remark that our warm-up procedure is useful beyond this work.
For example, other kinds of optimal design problems such as D-optimal design also relies on knowing $\theta^*$ especially w.r.t. the variance of each measurement.
We believe one can use \WAR in the same way as we apply do and prove a formal approximation guarantee when using the plugin estimator returned by \WAR. 
Furthermore, \WAR can be immediately used for the pure exploration logistic bandit problems such as~\citet{jun2020improved}.
The same warmup idea should be applicable to other generalized linear models.
Finally, studying fundamental limits of fixed-design-type inequalities such as asking under what conditions we can get the tight Gaussian-like tail inequalities and understanding the fundamental tradeoff between bias and variance \textit{without distributional assumptions on $x$'s} are interesting open research directions. Finally, as with any work that can be used for myopic optimization in recommender systems, there is a danger of introducing and reinforcing pre-existing biases that can be potentially harmful to society. We encourage practitioners to carefully consider the metrics they are using and build-in safeguards to prevent adverse effects.

\section{Related work}

The problem of regret minimization for logistic linear bandits was first introduced by~\citet{filippi10parametric} where they proposed an optimistic approach for GLMs called \texttt{GLM-UCB} and achieved a regret bound of  $\kappa d\sqrt{T}$; hereafter we omit from regret bounds logarithmic factors of variables other than $K$ and lower-order terms.
A follow-up work by \citet{li2017provably} provided a regret guarantee for finite but changing arm sets, which achieved the regret bound of $\kappa\sqrt{dT\log(K)}$ where $K$ is the number of arms by employing a SupLinRel type algorithm~\cite{auer02using} along with a novel fixed design inequality.
Such a bound is better than that of \texttt{GLM-UCB} as long as $K=O(e^d)$.
The bound of~\citet{li2017provably}, however, came with a large lower-order term and assumes stochastic contexts and a lower bound on the minimum eigenvalue.
We remark that Section 5.1 of~\citet{li2017provably} has a discussion on the dimension dependence that we have found to be misleading because they ignore the dependence on $\sig_0^2$, a bound on the smallest eigenvalue, yet  the dependence on $d$ exists in $\sig_0^2$ as $\sig_0^2 = 1/d$ at best and $\sig_0^2 \le 1/d$ in general.
Beyond UCB or SupLinRel type strategies, Thompson sampling methods \cite{abeille17linear,dong2019performance,kveton17stochastic_arxiv,dumitrascu18pgts} have also been developed for logistic linear bandits, but they all include the factor $\kappa$ on the leading term of the regret bound except for \citet{dong2019performance} that consider the Bayesian framework and \citet{dumitrascu18pgts} that do not have a guarantee.
The seminal work by \citet{faury2020improved} established the first regret bound of $d\sqrt{T}$ that does not have $\kappa$ in the leading term, and a follow-up work by~\citet{abeille2021instance} showed that the bound can be further improved by introducing a factor $\sqrt{\dmu^*}$ in the leading term and proved a matching lower bound for the infinite arm set case.
\citet{jun2020improved} proposed a fixed-design confidence interval that improves upon~\citet{li2017provably} in terms of both $\kappa$ and $d$, which led to $\sqrt{dT\log(K)}$ regret bound.
However, their result inherits the assumptions of stochastic contexts and bounded minimum eigenvalues from~\citet{li2017provably}.
As mentioned in the introduction, our algorithm \texttt{HOMER} achieves the best regret bound both in the leading term and the lower order term as long as $K = O(e^d)$.

Finally we point out that Algorithm~\ref{alg:mix-exp-design-regret} is motivated by similar experimental design procedures for regret minimization from Chapter 20 of~\cite{lattimore2020bandit} and \cite{wagenmaker2021experimental}.


\section{Proofs for Section~\ref{sec:alg}}
\subsection{Confidence widths in mean parameter space}
\begin{proof}[Proof of Lemma~\ref{lem:mean_param_bound}]

Due to the condition \eqref{eq:warmup} on $\xi_t^2$, we can apply~\citet[Theorem 1]{jun2020improved} for each $x\in\cX$ and take a union bound to obtain the following: w.p. at least $1-\dt$, 
\begin{align}\label{eq:mean_param_bound-1}
 \forall x\in\cX,  |x^\T(\hat{\theta}_t - \theta_\ast)| \leq 3.5 \|x\|_{H_t(\theta_\ast)^{-1}} \sqrt{\log(2(2+t_{\eff})|\cX|/\delta)}
\end{align}
The LHS above can be written as $\fr{|\mu(x^\T\hth_t) - \mu(x^\T\th_*)|}{\alpha(x^\T\hth_t,x^\T \th_*))} $.
Rearranging it, we have
\begin{align*}
  |\mu(x^\T\hth_t) - \mu(x^\T\th_*)|
    &\le 3.5 \alpha(x^\T\hth_t,x^\T \th_*)  \|x\|_{H_t(\theta_\ast)^{-1}} \sqrt{\log(2(2+t_{\eff})|\cX|/\delta)}
  \\&\le 3.5 \fr{e^D - 1}{D} \dmu(x^\T\th_*)  \|x\|_{H_t(\theta_\ast)^{-1}} \sqrt{\log(2(2+t_{\eff})|\cX|/\delta)}
\end{align*}
where $D := \max_{x \in \cX} |x^\T(\hat{\theta}_t - {\theta}_\ast)|$.
One can show that our condition on $\xi_t^2$ and \eqref{eq:mean_param_bound-1} implies that $D \le 1$, which leads to $\fr{e^D - 1}{D} \le 6.1$.

\end{proof}

\subsection{Regret bound}



The following lemma on relating variance of two $\theta$'s becomes useful, which is implied directly by the proof of \citet[Lemma 5]{jun2020improved}.
\begin{lem} \label{lem:warmup_guarantee}
If $\th$ satisfies $\max_{s\in[t]} |x_s^\T(\th - \th_*)| \le 1$, then
\begin{align*}
  \frac{1}{3}H_\lambda( \theta_\ast) \preceq H_\lambda( \th) \preceq 3H_\lambda( \theta_\ast)~. 
\end{align*}
Furthermore, for any $x$ with $|x^\T(\th - \th_*)|\le1$,
\begin{align*}
  \fr13 \dmu(x^\T\th_*) \le \dmu(x^\T\th) \le 3 \dmu(x^\T\th_*)
\end{align*}
\end{lem}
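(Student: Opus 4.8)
The plan is to reduce the matrix statement to a scalar comparison of $\dmu$ at two nearby points, which in turn follows from the self-concordance of the logistic link. First I would record the elementary identity $\ddot\mu(z)/\dmu(z) = 1 - 2\mu(z)$, which gives $\frac{d}{dz}\log\dmu(z) = 1-2\mu(z) \in [-1,1]$ since $\mu(z)\in(0,1)$. Integrating this derivative bound along the segment from $x^\T\theta_*$ to $x^\T\theta$ yields $|\log\dmu(x^\T\theta) - \log\dmu(x^\T\theta_*)| \le |x^\T(\theta-\theta_*)|$, and the hypothesis $|x^\T(\theta-\theta_*)|\le 1$ then gives $e^{-1}\dmu(x^\T\theta_*) \le \dmu(x^\T\theta) \le e\,\dmu(x^\T\theta_*)$.

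The second (scalar) claim then follows immediately once I note the numerical facts $e < 3$ and $e^{-1} > 1/3$, which loosen the exponential constants to the stated factor of $3$. For the matrix claim, I would apply this scalar two-sided bound to each arm $x\in\Supp(\lambda)$ --- each such arm appears among the measurement vectors $\{x_s\}_{s\in[t]}$ and hence satisfies $|x^\T(\theta-\theta_*)|\le 1$ by hypothesis --- multiply through by the PSD rank-one term $\lambda_x xx^\T \succeq 0$, and sum over $x$. Because a nonnegative combination of Loewner inequalities preserves the ordering, summing $\tfrac13 \dmu(x^\T\theta_*)\,\lambda_x xx^\T \preceq \dmu(x^\T\theta)\,\lambda_x xx^\T \preceq 3\dmu(x^\T\theta_*)\,\lambda_x xx^\T$ produces exactly $\tfrac13 H_\lambda(\theta_*) \preceq H_\lambda(\theta) \preceq 3 H_\lambda(\theta_*)$.

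The computation is short, so there is no serious obstacle; the only points that need care are (i) verifying the self-concordance identity $\ddot\mu = \dmu\,(1-2\mu)$ and hence the unit-Lipschitz bound on $\log\dmu$, which is exactly where the structure of the logistic link is used, and (ii) checking that the crude exponential constants $e^{\pm 1}$ indeed fit inside the factor $3$, so that the clean symmetric bound in the statement holds. Alternatively, one could invoke the generalized self-concordance bound of \citet[Lemma 9]{faury2020improved} in place of the explicit derivative computation, which is the route taken in the proof of \citet[Lemma 5]{jun2020improved} that this lemma cites.
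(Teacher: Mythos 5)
Your proof is correct and follows essentially the same route as the paper, which simply defers to the proof of \citet[Lemma 5]{jun2020improved}: that proof is built on exactly the self-concordance bound $|\ddot\mu|\le\dmu$ you derive explicitly, giving $\dmu(x^\T\theta)/\dmu(x^\T\theta_*)\in[e^{-1},e]\subset[1/3,3]$, and the matrix statement then follows by summing the weighted rank-one Loewner inequalities as you describe. The only value you add is making the cited argument self-contained, which is fine.
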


In the main algorithm, we used the notation $\lambda_{k,i}$ for the probability mass assigned to the $i$-th arm in the set $\cX$.
Hereafter, we use the notation $\lambda_{k,x}$ for arm $x\in\cX$, which we found to be useful.

\begin{lem}\label{lem:mix_h_vs_normal}
For any $\theta$, $H_{\lambda_k}( \theta)^{-1}\preceq 2\frac{n_k^G + n_k^H}{n_k^G}H_{\lambda_k^G}( \theta)^{-1}$ and $H_{\lambda_k}( \theta)^{-1}\preceq 2\frac{n_k^G + n_k^H}{n_k^H}H_{\lambda_k^H}( \theta)^{-1}$
\end{lem}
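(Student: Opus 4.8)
The plan is to show that the normalized mixture weights $\lambda_{k,x}$ pointwise dominate a constant multiple of each of the two component designs $\lambda_k^G$ and $\lambda_k^H$, and then lift this weight domination to a Loewner-order domination of the corresponding information matrices, which inverts to give the claimed bounds.

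First I would unpack the construction of $\lambda_k$. Write $Z := \sum_{x\in\cX_k}\Tilde{\lambda}_{k,x}$ for the normalizing constant, so that $\lambda_{k,x} = \Tilde{\lambda}_{k,x}/Z$. The definition $\Tilde{\lambda}_{k,x} = \max\{\frac{n_k^H}{n_k^H+n_k^G}\lambda_{k,x}^H, \frac{n_k^G}{n_k^G+n_k^H}\lambda_{k,x}^G\}$ immediately gives the pointwise lower bounds $\Tilde{\lambda}_{k,x} \ge \frac{n_k^G}{n_k^G+n_k^H}\lambda_{k,x}^G$ and $\Tilde{\lambda}_{k,x}\ge\frac{n_k^H}{n_k^H+n_k^G}\lambda_{k,x}^H$ for every $x\in\cX_k$.

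The one step requiring a little care is the upper bound on $Z$. Using $\max(a,b)\le a+b$ together with the fact that $\lambda_k^H$ and $\lambda_k^G$ are probability distributions over $\cX_k$, I would bound
\[
Z \le \sum_{x\in\cX_k}\del{\frac{n_k^H}{n_k^H+n_k^G}\lambda_{k,x}^H + \frac{n_k^G}{n_k^G+n_k^H}\lambda_{k,x}^G} = 1 \le 2 .
\]
(The tighter constant $Z\le1$ would in fact remove the factor $2$ from the statement, but $Z\le 2$ is all we need.) Combining with the pointwise bounds yields $\lambda_{k,x} = \Tilde{\lambda}_{k,x}/Z \ge \frac12\frac{n_k^G}{n_k^G+n_k^H}\lambda_{k,x}^G$, and symmetrically with $H$ replacing $G$.

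Finally I would lift these scalar inequalities to matrices. Since $\dmu(x^\T\theta)\,xx^\T\succeq 0$ for each $x$, the map $\lambda\mapsto H_\lambda(\theta)=\sum_x\lambda_x\dmu(x^\T\theta)xx^\T$ is monotone in the weights, so the weight domination gives $H_{\lambda_k}(\theta)\succeq \frac12\frac{n_k^G}{n_k^G+n_k^H}H_{\lambda_k^G}(\theta)$. The $G$-optimal design renders $H_{\lambda_k^G}(\theta)$ positive definite, so both sides are invertible, and the operator-antitonicity of matrix inversion on the positive-definite cone ($0\prec A\preceq B \Rightarrow B^{-1}\preceq A^{-1}$) then gives $H_{\lambda_k}(\theta)^{-1}\preceq 2\frac{n_k^G+n_k^H}{n_k^G}H_{\lambda_k^G}(\theta)^{-1}$, the first claim; repeating the argument with the $H$-component bound gives the second. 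The only genuinely non-routine point is the normalization bound on $Z$; everything else is bookkeeping with the Loewner order.
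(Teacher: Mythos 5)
Your proposal is correct and follows essentially the same route as the paper: lower-bound the unnormalized mixture weights $\Tilde{\lambda}_{k,x}$ pointwise by each scaled component design, upper-bound the normalizer via $\max(a,b)\le a+b$, and pass to the Loewner order and then to inverses. Your normalizer bound $Z\le 1$ is in fact slightly tighter than the paper's $Z\le 2\max\bigl\{\tfrac{n_k^G}{n_k^G+n_k^H},\tfrac{n_k^H}{n_k^G+n_k^H}\bigr\}$ (and, as you note, would remove the factor of $2$), but this is a cosmetic difference within the same argument.
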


\begin{proof}

Let $\lambda_{k,x}$ be the $x$ component of the $k$-th design. For any $\theta$,
\begin{align*}
    H_{\lambda_k}( \theta)& = \sum_{x\in\cX} \lambda_{k,x}\dot{\mu}(x^\T\theta)xx^\T\\
    & = \frac{1}{\sum_{x'}\Tilde{\lambda}_{k,x'} }\sum_{x\in\cX} \max\left\{\frac{n_k^G}{n_k^G + n_k^H}\lambda_{k,x}^G, \frac{n_k^H}{n_k^G + n_k^H}\lambda_{k,x}^H \right\}\dot{\mu}(x^\T\theta)xx^\T \\
    & \succeq \frac{1}{\sum_{x'}\Tilde{\lambda}_{k,x'}}\frac{n_k^G}{n_k^G + n_k^H}\sum_{x\in\cX} \lambda_{k,x}^f \dot{\mu}(x^\T\theta) x x^\T\\
    & = \frac{1}{\sum_{x'}\Tilde{\lambda}_{k,x'}}\frac{n_k^G}{n_k^G + n_k^H} H_{\lambda_k^G}( \theta)~.
\end{align*}
Note that
\begin{align*}
    \sum_{x'}\Tilde{\lambda}_{k, x'} & \leq \max\left\{\frac{n_k^G}{n_k^G + n_k^H}, \frac{n_k^H}{n_k^G + n_k^H} \right\} \sum_{x'}\max\{\lambda_{k,x'}^G, \lambda_{k,x'}^H\}\\
    & \leq \max\left\{\frac{n_k^G}{n_k^G + n_k^H}, \frac{n_k^H}{n_k^G + n_k^H} \right\} \sum_{x'}\lambda_{k,x'}^G +  \lambda_{k,x'}^H \\
    & = 2\max\left\{\frac{n_k^G}{n_k^G + n_k^H}, \frac{n_k^H}{n_k^G + n_k^H} \right\}. 
\end{align*}
Hence, 
\begin{align*}
    H_{\lambda_k}( \theta)
    & \succeq 
    \frac{1}{2}\frac{\frac{n_k^G}{n_k^G + n_k^H}}{\max\left\{\frac{n_k^G}{n_k^G + n_k^H}, \frac{n_k^H}{n_k^G + n_k^H} \right\}}
    H_{\lambda_k^G}( \theta) 
    \succeq \frac{1}{2}\frac{\frac{n_k^G}{n_k^G + n_k^H}}{\frac{n_k^G}{n_k^G + n_k^H} + \frac{n_k^H}{n_k^G + n_k^H} }
    H_{\lambda_k^G}( \theta)
     = \frac{1}{2}\frac{n_k^G}{n_k^G + n_k^H}
    H_{\lambda_k^G}( \theta).
\end{align*}
This implies that
\begin{align*}
     H_{\lambda_k}( \theta)^{-1}\preceq 2\frac{n_k^G + n_k^H}{n_k^G}H_{\lambda_k^G}( \theta)^{-1}.
\end{align*}
The proof for $H_{\lambda_k^H}( \theta)$ follows identically.
\end{proof}

\begin{lem}\label{lem:mix_mean_param_regret_correct}
For any $\delta \leq e^{-1}$, with probability at least $1-\delta$, for all rounds $k\in {\mathbb N}$ such that $n_k^H + n_k^G > r(\eps)$, define the event $\mc{E}_k:= \{x_\ast \in \cX_k \ \text{ and } \ \max_{x\in \cX_k}\mu(x_\ast^\T \theta_\ast) - \mu(x^\T \theta_\ast) \leq 8\cdot 2^{-k}\}$. Define $\mc{E} = \bigcap_{k=1}^\infty \mc{E}_k$. The $\mathbb{P}(\mc{E}) \geq 1-\delta$.  
\end{lem}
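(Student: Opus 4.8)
The plan is a phased-elimination clean-event argument by induction on the round index $k$. First I would fix the clean event as the intersection over rounds of the high-probability events of Lemma~\ref{lem:mean_param_bound}, invoked in round $k$ with confidence parameter $\delta_k=\delta/(4(2+|\cX|)|\cX|k^2)$. Because $\sum_{k\ge1}\delta_k\le\delta$ (using $\sum_k k^{-2}<2$), a union bound makes this event hold with probability at least $1-\delta$, and it then suffices to show $\mc{E}=\bigcap_k\mc{E}_k$ holds deterministically on it; the validity of $\hat\theta_0$ returned by \texttt{WarmUp} is the separately accounted $\delta$ that turns Theorem~\ref{thm:mix_regret_complexity} into a $1-2\delta$ statement.

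I would carry an induction hypothesis bundling three facts at round $k$: (i) $|x^\T(\hat\theta_{k-1}-\theta_\ast)|\le1$ for all $x\in\cX$; (ii) $x_\ast\in\cX_k$; and (iii) every $x\in\cX_k$ has gap at most $8\cdot2^{-k}$. The base case $k=1$ is immediate, since $\cX_1=\cX$, all gaps are at most $1\le8\cdot2^{-1}$, and (i) for $\hat\theta_0$ is exactly the $\delta$-valid warmup guarantee. The crux of the inductive step is certifying that the round-$k$ samples satisfy the warmup condition~\eqref{eq:warmup}: combining the \texttt{Round} guarantee, the mixture bound $H_{\lambda_k}(\theta)^{-1}\preceq 2\tfrac{n_k^G+n_k^H}{n_k^G}H_{\lambda_k^G}(\theta)^{-1}$ of Lemma~\ref{lem:mix_h_vs_normal}, and Lemma~\ref{lem:warmup_guarantee} (which transfers the design computed at $\hat\theta_{k-1}$ to one at $\theta_\ast$, licensed by hypothesis (i)), the choice of $n_k^G$ against $\hat g_k(\lambda_k^G)$ and $\gamma(d)$ forces $\max_s\|x_s\|^2_{H(\theta_\ast)^{-1}}\le1/\gamma(d)$. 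Lemma~\ref{lem:mean_param_bound} then applies to $\hat\theta_k$, re-establishing (i) for the next round and giving the per-arm accuracy $|\mu(x^\T\hat\theta_k)-\mu(x^\T\theta_\ast)|\le\tfrac12 2^{-k}$ for all $x\in\cX_k$; this last bound follows by feeding the analogous $H_{\lambda_k^H}$ mixture bound together with the factor-$3^3$ comparison $\dot\mu(x^\T\theta_\ast)^2\|x\|^2_{H_{\lambda_k^H}(\theta_\ast)^{-1}}\le3^3\,\hat h_k(\lambda_k^H)$ into the confidence width, the constant $6(1+\epsilon)6.1^2 3^3$ in $n_k^H$ being calibrated exactly for this.

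With per-round accuracy $a:=\tfrac12 2^{-k}$ in hand, the elimination rule closes the induction. For $x_\ast$: since $\mu(x'^\T\hat\theta_k)\le\mu(x_\ast^\T\theta_\ast)+a$ for every $x'$ and $\mu(x_\ast^\T\hat\theta_k)\ge\mu(x_\ast^\T\theta_\ast)-a$, the empirical gap of $x_\ast$ is at most $2a<2\cdot2^{-k}$, so $x_\ast$ is not removed and $x_\ast\in\cX_{k+1}$, giving (ii) at $k+1$. For any surviving $x$, its empirical gap is below $2\cdot2^{-k}$; adding $2a$ and using $\mu(x_\ast^\T\theta_\ast)=\max_{x'\in\cX_k}\mu(x'^\T\theta_\ast)$ (valid since $x_\ast\in\cX_k$) yields a true gap below $3\cdot2^{-k}\le8\cdot2^{-(k+1)}$, which is (iii) at $k+1$. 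This advances the induction and establishes $\mc{E}$.

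The main obstacle I anticipate is the warmup-maintenance step, which is only superficially circular: verifying~\eqref{eq:warmup} in round $k$ --- the prerequisite for invoking Lemma~\ref{lem:mean_param_bound} on $\hat\theta_k$ --- depends on hypothesis (i) for $\hat\theta_{k-1}$, which is itself the first conclusion of Lemma~\ref{lem:mean_param_bound} from round $k-1$. Threading this dependency cleanly through the induction, and tracking every multiplicative constant (the two factor-$3$'s from Lemma~\ref{lem:warmup_guarantee}, the factor-$2$ from mixing, and the $(1+\epsilon)$ from rounding) so that they all land below the thresholds baked into $n_k^G$ and $n_k^H$, is where the real effort lies; by comparison, the elimination bookkeeping in the previous paragraph is routine.
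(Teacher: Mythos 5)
Your proposal is correct and follows essentially the same route as the paper's proof: a per-round union bound at level $\delta_k$, certification of the warmup condition via the rounding guarantee together with Lemmas~\ref{lem:mix_h_vs_normal} and~\ref{lem:warmup_guarantee} (licensed by the previous round's $|x^\T(\hat\theta_{k-1}-\theta_\ast)|\le 1$), an application of Lemma~\ref{lem:mean_param_bound} for per-round accuracy, and the standard elimination induction. The only minor discrepancy is that the algorithm's constants are calibrated to give per-round accuracy $2^{-k}$ rather than your $\tfrac12\cdot 2^{-k}$, but the bookkeeping closes either way since surviving arms then have true gap below $4\cdot 2^{-k}=8\cdot 2^{-(k+1)}$.
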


\begin{proof}
Let $k$ satisfy $n_k^H + n_k^G > r(\eps)$.
Let $x_1,\ldots,x_{n_k} \in \cX_k$ be the arms pulled in iteration $k$ and define $H_{n_k} (\th_*) = \sum_{s=1}^{n_k} \dmu(x_s^\T \th_*)x_s x_s^\T $.
First note that 
\begin{align*}
    \max_{x\in\cX_k} \|x\|_{H_{n_k}({\theta}_{\ast})^{-1}}^2 & \leq \frac{(1+\epsilon)}{n_k}\max_{x\in\cX_k} \|x\|_{H_{\lambda_k}( {\theta}_{\ast})^{-1}}^2 \\
    & = \frac{(1+\epsilon)}{n_k^G + n_k^H}\max_{x\in\cX_k} \|x\|_{H_{\lambda_k}( {\theta}_{\ast})^{-1}}^2 \\
    & \stackrel{\text{Lemma}~\ref{lem:mix_h_vs_normal}}{\leq} 2\frac{(1+\epsilon)}{n_k^G}\max_{x\in\cX_k} \|x\|_{H_{\lambda_k^G}( {\theta}_{\ast})^{-1}}^2 \\
    & \stackrel{\text{Lemma~\ref{lem:warmup_guarantee}} }{\leq} \frac{6(1+\epsilon)}{n_k^G}\max_{x\in\cX_k} \|x\|_{H_{\lambda_k^G}( \hat{\theta}_{k-1})^{-1}}^2\\
    & \leq \gamma(d, |\cX_k|, \delta_k)^{-1}.
\end{align*}
Hence, the mixed allocation satisfies the warmup condition need in each round.
Similarly, we can show that
\begin{align}\label{eq:n_H_k}
  \max_{x\in\cX_k} \|x\|_{H_{n_k}({\theta}_{\ast})^{-1}}^2 \le \frac{6(1+\epsilon)}{n_k^H}\max_{x\in\cX_k} \|x\|_{H_{\lambda_k^H}( \hat{\theta}_{k-1})^{-1}}^2
\end{align}
By Lemma~\ref{lem:mean_param_bound}, we have, with probability $1-\delta_k$, $\forall x\in\cX_k$,

\begin{align*}
    |\mu(x ^\T \hat{\theta}_k ) - \mu(x_s^\T \theta_\ast )| &\leq 6.1\sqrt{\dot{\mu}(x^\T\theta_\ast)^2\|x\|_{H_{n_k}(\theta_\ast)^{-1}}^2 \log(1/\delta_k)}\\
    & \leq 6.1\sqrt{\frac{(1+\epsilon)\dot{\mu}(x^\T\theta_\ast)^2\|x\|_{H_{\lambda_k}(\theta_\ast)^{-1}}^2 \log(1/\delta_k)}{n_k}}\\
    & \stackrel{\text{\eqref{eq:n_H_k}}}{\leq} 6.1\sqrt{\frac{6(1+\epsilon)\dot{\mu}(x^\T\theta_\ast)^2\|x\|_{H_{\lambda_k^H}(\theta_k)^{-1}}^2 \log(1/\delta_k)}{n_k^H}}\\
    & \stackrel{\text{Lemma~\ref{lem:warmup_guarantee}}}{\leq} 6.1\sqrt{\frac{6(1+\epsilon) (3\dot{\mu}(x^\T\hth_{k-1} ))^2 \cd3\|x\|_{H_{\lambda_k^H}  (\hat\theta_{k-1})^{-1}}^2 \log(1/\delta_k)}{n_k^H}}\\
    & \leq 2^{-k}
\end{align*}

Via a union bound over the rounds, this condition holds in every round. 

First, we show that the best arm, $x_\ast$ is never eliminated. 
Note that $\cX_1 = \cX$. 
Assume the inductive hypothesis that $x_\ast \in \cX_k$. For any suboptimal arm $x \in \cX_k$,
\begin{align*}
    \mu(x^\T \htheta_k) - \mu(x_\ast^\T \htheta_k ) 
    & = \mu(x^\T \htheta_k) - \mu(x^\T \theta_\ast) + \mu(x^\T \theta_\ast) - \mu(x_\ast^\T \theta_\ast) + \mu(x_\ast^\T \theta_\ast) -\mu(x_\ast^\T \htheta_k ) \\
    & < \mu(x^\T \htheta_k) - \mu(x^\T \theta_\ast) + \mu(x_\ast^\T \theta_\ast) -\mu(x_\ast^\T \htheta_k ) \\
    & \le 2\cdot 2^{-k}.
\end{align*}
For $x = x_*$, we trivially have $\mu(x^\T \htheta_k) - \mu(x_\ast^\T \htheta_k ) = 0 < 2\cd2^{-k}  $ .
Thus, $\max_{x\in\cX} \mu(x^\T \htheta_k) - \mu(x_\ast^\T \htheta_k )  < 2\cdot 2^{-k}$ and hence $x_\ast \in \cX_{k+1}$. 

Next, we show that for any $x$ such that $\mu(x_\ast^\T \theta_\ast) - \mu(x^\T \theta_\ast) > 4\cdot 2^{-k}$, $x \not\in \cX_{k+1}$. 
\begin{align*}
    \mu(x_\ast^\T \htheta_k ) - \mu(x^\T \theta_\ast)
    & = \mu(x_\ast^\T \htheta_k )- \mu(x_\ast^\T \theta_\ast) + \mu(x_\ast^\T \theta_\ast) -  \mu(x^\T \theta_\ast) 
    \\& > 2\cdot 2^{-k}
\end{align*}
which implies that $x \not\in \cX_{k+1}$. Taken together, these statements imply that the regret of any arm present in round $k$ is at most $8\cdot 2^{-k}$. 
\end{proof}

\begin{proof}[\textbf{Proof of Theorem~\ref{thm:mix_regret_complexity}}]
Let $T' = T - T_B$
be the number of samples taken after the warmup procedure. We assume $T' > 0$. Since all rewards are bounded by $1$, the warmup contributes at most
$R_B$. 
to the total regret. We proceed by bounding the remaining $T'$ samples following warmup. Let $n_{k,x}$ denote the number of times $x$ is pulled in round $k$ and let $\lambda_{k,x}$ denote the fraction of the allocation placed on arm $x$. Assume the event $\cE$ defined in Lemma~\ref{lem:mix_mean_param_regret_correct} that occurs with probability at least $1-\delta$ and that the warmup procedure succeeds that also occurs with probability at least $1-\delta$. 
Define $L := \lceil \log_2(8 (\Delta \vee \nu)^{-1})\rceil$, $\Delta_x := \mu(x_*^\T\th_*) - \mu(x^\T\th_*)$, and $\Delta := \min_{x\in \mc{X} \sm \{x_*\}} \Delta_x$. 
Then, for any $\nu\geq0$, 
\begin{align*}
    R_{T} - R_{B} 
    & = \sum_{x \in \cX \backslash \{x_\ast\}} \Delta_x T_x \\
    & \leq T'\nu + \sum_{k=1}^{L}  \sum_{x \in \cX \backslash \{x_*\}: \Delta_x > \nu} \Delta_x n_{k,x} \\
    & \le T'\nu + \sum_{k=1}^{L}  \sum_{x \in \cX \backslash \{x_*\}: \Delta_x > \nu} \one\cbr{n_k \le r(\eps)} 1\cd n_{k,x} + \one\cbr{n_k > r(\eps)} 8 \cd 2^{-k} n_{k,x} \\
    & \stackrel{\text{Lemma~\ref{lem:mix_mean_param_regret_correct}}}{\leq} T'\nu + L r(\epsilon) + 8 \sum_{k=1}^{L}\one\cbr{n_k > r(\eps)} 2^{-k} n_k \\
    & \leq T'\nu + L r(\epsilon)+ \sum_{k=1}^{L} \one\cbr{n_k > r(\eps)} 8\cdot2^{-k} n_k^G  + \sum_{k=1}^{L} \one\cbr{n_k > r(\eps)} 8\cdot2^{-k} n_k^H
\end{align*}
We analyze each sum individually. 
\begin{align*}
    \sum_{k=1}^{L} 2^{-k} n_k^G 
    &\le L +  \sum_{k=1}^{L} 6(1+\epsilon) 2^{-k}\gamma(d, |\cX_k|, \delta_k) \hat g_k(\lambda_k^G) \\
    & \leq L + 6(1+\epsilon)\gamma(d, |\cX|, \delta_L) \sum_{k=1}^{L} 2^{-k} \max_{x\in\cX_k} \|x\|_{H_{\lambda_k^G}( \hat{\theta}_{k-1})^{-1}}^2 \\
    & = L + 6(1+\epsilon)\gamma(d, |\cX|, \delta_L) \sum_{k=1}^{L} 2^{-k} \min_{\lambda\in \triangle_{\mc{X}_k}}\max_{x\in\cX_k} \|x\|_{H_{\lambda}( \hat{\theta}_{k-1})^{-1}}^2 \\
    & \stackrel{\text{Lemma~\ref{lem:warmup_guarantee}}}{\leq} L + 18(1+\epsilon)\gamma(d, |\cX|, \delta_L)\sum_{k=1}^{L} 2^{-k} \min_{\lambda\in \triangle_{\mc{X}_k}}\max_{x\in\cX_k} \|x\|_{H_{\lambda}( {\theta}_\ast)^{-1}}^2 \\
    & \stackrel{(a)}{\leq} L + 18(1+\epsilon)\gamma(d, |\cX|, \delta_L)\sum_{k=1}^{L} 2^{-k} \min_{\lambda\in \triangle_{\mc{X}_k}}\max_{x\in\cX_k} \kappa_0\|x\|_{(\sum_{x\in \mc{X}_k} \lambda_x xx^{\top})^{-1}}^2 \\
    & \stackrel{(b)}{\leq} L + 18\kappa_0 (1+\epsilon)\gamma(d, |\cX|, \delta_L)  d \\
\end{align*}
where $(a)$ uses the fact that for any $\lambda\in \triangle_{\mc{X}_{k}}$, $H_\lambda(\th_*) \succeq \min_{x\in \mc{X}} \dot\mu(x^{\top} \theta_{\ast}) \sum_{x\in \mc{X}} \lambda_{x} xx^{\top}.$ and $(b)$ uses the Kiefer-Wolfowitz theorem.
Next, 
\begin{align*}
    \sum_{k=1}^{L} 2^{-k} n_k^H
    & \le L + 6(1+\epsilon) 6.1^2 3^3\sum_{k=1}^{L}  2^{k}\hat h_k(\lambda_k^H)\log(\delta_k^{-1})\\
    & = L + 6(1+\epsilon)6.1^2 3^3\sum_{k=1}^{L}  2^{k} \max_{x\in\cX_k} \dot{\mu}(x^\T\hat{\theta}_{k-1})^2\|x\|_{H_{\lam_k^H}( \hat{\theta}_{k-1})^{-1}}^2\log(\delta_k^{-1})\\
    & = L + 6(1+\epsilon)6.1^2 3^3\sum_{k=1}^{L}  2^{k} \min_{\lambda\in \triangle_{\mc{X}_k}}\max_{x\in\cX_k} \dot{\mu}(x^\T\hat{\theta}_{k-1})^2\|x\|_{H_{\lam}( \hat{\theta}_{k-1})^{-1}}^2\log(\delta_k^{-1})\\
    &\stackrel{\text{Lemma~\ref{lem:warmup_guarantee}}}{\leq}  L +18(1+\epsilon)6.1^2 3^6\sum_{k=1}^{L}  2^{k}\min_{\lambda\in \triangle_{\mc{X}_k}}\max_{x\in\cX_k} \dot{\mu}(x^\T{\theta}_{\ast})\|\sqrt{\dot{\mu}(x^\T{\theta}_{\ast})} x\|_{H_{\lambda}( {\theta}_\ast)^{-1}}^2\log(\delta_k^{-1})\\
    &\leq  L +18(1+\epsilon)6.1^2 3^6\sum_{k=1}^{L}  2^{k}\max_{x\in\cX_k} d\dot{\mu}(x^\T{\theta}_{\ast})\log(\delta_k^{-1})~.
\end{align*}
where for the last line we have again used the Kiefer-Wolfowitz theorem.  Recall that $\cX_k\subset \cS_k = \{x \in \cX: \mu(x_\ast^\T \theta_\ast) - \mu(x^\T \theta_\ast) \leq 8 \cdot 2^{-k}\}$. Let $\mu^\ast := \mu(x_\ast^\T \theta_\ast)$. 
Hence, 
\begin{align*}
    \max_{x\in\cS_k} \dot{\mu}(x^\T{\theta}_{\ast}) 
    & = \max_{x\in\cS_k} \mu(x^\T{\theta}_{\ast}) (1 - \mu(x^\T{\theta}_{\ast})) \\
    & \leq \max_{x\in\cS_k} \mu^\ast \cdot (1 - \mu(x^\T{\theta}_{\ast})) \\
    & \leq \mu^\ast \cd (1 - \mu^\ast) + 8\mu^\ast2^{-k} \\
    & \leq \dot\mu^\ast + 8\cdot2^{-k}.
\end{align*}
Plugging this in, we have that
\begin{align*}
    2^{k}\max_{x\in\cS_k} d \dot{\mu}(x^\T{\theta}_{\ast}) 
    \leq 2^{k}d \dot\mu^\ast + 8d.
\end{align*}
Summing up over all rounds we see that 
\begin{align*}
    \sum_{k=1}^{L} 2^{k} \max_{x\in\cX_k} d \dot{\mu}(x^\T{\theta}_{\ast})
    &\leq  \sum_{k=1}^{\lceil \log_2(8 (\Delta \vee \nu)^{-1})\rceil} \del{2^{k}d \dot\mu^\ast + 8d}\\
    &\leq 8d\lceil \log_2(8 (\Delta \vee \nu)^{-1})\rceil + 32d \dot\mu^\ast \frac{1}{\Delta \vee \nu}
\end{align*}

Thus, for some unspecfied constant $c$, the total regret of our algorithm is  
\begin{align*}
    R_{T_B} + c\del{T'\nu + (1+\epsilon)\log(\delta_L^{-1}) d\dot\mu^\ast (\Delta \vee \nu)^{-1} + (1+r(\eps))\log((\Delta\vee \nu)^{-1}) + (1+\epsilon)d\kappa_0\gamma(d, |\cX|, \delta_L)}~.
\end{align*}
\end{proof}

\begin{proof}[\textbf{Proof of Corollary~\ref{cor:minimax_regret}}]

Throughout, we take $\delta = 1/T$. Hence with probability at most $1/T$ we get regret bounded by $T$. Otherwise, with probability at least $1-1/T$, we get the bound given in Theorem~\ref{thm:mix_regret_complexity}. Combining these with the law of total expectation bounds the regret in expectation. Additionally, we make use of the naive warmup~\eqref{eq:warmup-naive} which suffers regret at most $O(d^2\kappa \log(|\cX|T))$ where we note that $\kappa \geq \kappa_0$.

Note that Theorem~\ref{thm:mix_regret_complexity} holds for any $\nu \geq 0$. Therefore, we may tune $\nu$. 

First, plugging in $\nu = O\left(\sqrt{\frac{\log\left(|\cX|/\delta\right)d \dot\mu(x_\ast^\T\theta_\ast)}{T}} \right)$ leads to a regret within a doubly logarithmic factor of
\begin{align*}
    \sqrt{dT\dot\mu(x_\ast^\T\theta_\ast)\log(|\cX|T)} + d \log\left(\sqrt{\frac{T}{d\dot\mu(x_\ast^\T\theta_\ast)}}\right) + d^2\kappa\log(|\cX|T) + d\kappa_0\del{d + \ln\del{|\cX|T}}
\end{align*}
which is on the order of 
\begin{align*}
    \sqrt{dT\dot\mu(x_\ast^\T\theta_\ast)\log(|\cX|T)} + d^2\kappa\log(|\cX|T)
\end{align*}
since $\kappa_0 \leq \kappa$ and $1/\dot\mu(x_\ast^\T\theta_\ast) \leq \kappa$. 

Otherwise, we may set $\nu = 0$ leading to a regret within a doubly logarithmic factor of 
\begin{align*}
    \frac{d\dot\mu(x_\ast^\T\theta_\ast)}{\Delta}\log(|\cX|T) + d \log\left(\frac{1}{\Delta}\right) + d^2\kappa\log(|\cX|T) + d\kappa_0\del{d + \ln\del{|\cX|T}}
\end{align*}
which is on the order of 
\begin{align*}
    \frac{d\dot\mu(x_\ast^\T\theta_\ast)}{\Delta}\log(|\cX|T) + d^2\kappa\log(|\cX|T). 
\end{align*}
Combining these two statements completes the proof.
\end{proof}

\section{More on Section~\ref{sec:warmup}}

\subsection{A Practical Version of \WAR}

For clarity, we denote by $x^{(i)}$ the $i$-th arm in the arm set $\cX$ and define $K := |\cX|$.
For completeness, Algorithm~\ref{alg:2approx}, which invokes Algorithm~\ref{alg:initial-support} and~\ref{alg:wa}, describes the pseudocode for finding a 2-approximate solution for line~\ref{line:2approx} in Algorithm~\ref{alg:war}.
For solving the $G$-optimal design problem in pessimistic planning, one can use Frank-Wolfe with the standard step size: for iteration $k$, set $\lambda \larrow (1-\alpha_k)\lambda + \alpha_k e_{j^*}$ where $\alpha_k = 2/(k+2)$ and $j^* = \arg \min_{j\in[K]} \nabla_{\lam_j}  \del{ \max_{i\in[K]} \|x^{(i)}\|^2_{(H^\pes_\lambda)^{-1}}}$.

For optimistic probing of WAR, notice that if an arm $\Supp(\hlam^{(m)})$ is accepted already, then we do not need to pull that arm, which saves the overall sample complexity.
To encourage this, when finding the initial support (Algorithm~\ref{alg:initial-support}) one can replace the variable $\ell$ therein with the index of the accepted arms that are not included to $\cV$ yet.

Python implementation of WAR can be found in our supplementary material.

\begin{algorithm}
  \caption{Find 2-approximate solution}
  \label{alg:2approx}
  \small
  \begin{algorithmic}
    \Require{A set of arms $\cX=\{x^{(1)},\ldots,x^{(K)}\} \subset \RR^d$ with cardinality $K$.}
    \State Invoke Algorithm~\ref{alg:initial-support} to obtain a set of initial vectors $\cV \subseteq \cX$.
    \State Let $\lambda \in\triangle_{[K]}$ such that $u_k  = 1/|\cV|$ if $k \in \cV$ and $u_k=0$ otherwise.
    \State Invoke Algorithm~\ref{alg:wa} with $\lambda$ and $\eps \larrow 1$ to obtain a $2$-approximate solution $\hat \lambda$. 
    \State \textbf{return} $\hat \lambda$
  \end{algorithmic}
\end{algorithm}
\begin{algorithm}
  \caption{Initial support~\cite{todd16minimum}}
  \label{alg:initial-support}
    \small
  \begin{algorithmic}
    \Require{A set of arms $\cX=\{x^{(1)},\ldots,x^{(K)}\} \subset \RR^d$ with $|\cX| = K$ such that $\cX$ spans $\RR^d$}
    \State $Q \larrow I \in \RR^{d\times d}$
    \State $v \larrow Q_{\cdot,1} \in \RR^{d}$
    \State $\cV \larrow \emptyset$
    \For {$j=1,\ldots,d$}
        \State $\ell \larrow \arg\max_{k\in[K]} |v^\T x^{(k)}|$
        \State $y \larrow x^{(\ell)}$
        \State $\cV \larrow \cV \cup \{y\}$
        \State $w \larrow Q^\T y$ 
        \If {$j>1$}
            \State $w(i) \larrow  0, \forall i \le j-1$
        \EndIf
        \State $Q \larrow Q - Q (w + s_j \|w\| e_j ) \fr{1}{\|w\|(|w_j| + \|w\|)} (w + s_j\|w\| e_j)^\T$ where $s_j = 2\one\{w(j) \ge 0\} - 1$ and $e_j$ is $j$-th indicator vector.
        \State $v \larrow Q_{\cdot,j+1}$
    \EndFor
    \State \textbf{return} $\cV$
  \end{algorithmic}
\end{algorithm}
\begin{algorithm}
  \caption{Wolfe's algorithm with away step~\cite{todd16minimum}}
  \label{alg:wa}
    \small
  \begin{algorithmic}
    \Require{The arm set $\cX=\{x^{(1)},\ldots,x^{(K)}\} \subset \RR^d$, an initial point $\lambda \in \triangle_{[K]}$ and $\eps>0$.}
    \While {True}
        \State Compute (scaled) Cholesky factorization of $V := X\diag(\lambda)X^\T$ where $X\in\RR^{K\times d}$ is the design matrix of the arm vectors.
        \State $\om \larrow [(x^{(i)})^\T V^{-1} x^{(i)}]_{i\in[K]}$ and a (scaled) 
        \State $\eps_+ \larrow \max_{k\in[K]} (\om(k) - d)/d$ and let $i$ be its arg max.
        \State $\eps_- \larrow \max_{k\in[K]: \lambda(k) > 0} (d-\om(k))/d $ and let $j$ be its arg max.
        \If {$\max\{\eps_+,\eps_-\}\le\eps$}
            \State \textbf{break}
        \EndIf
        \If {$\eps_+ > \eps_-$}
            \State $\psi^* \larrow \frac{\om(i) - d}{(d-1) \om(i)}$.
            \State $\lambda \larrow (1+\psi^*)^{-1}(\lambda + \psi^* e_i)$.
        \Else 
            \State $\psi^* \larrow \frac{\om(j) - d}{(d-1) \om(j)}$.
            \State $\psi \larrow \max\{-\lambda(j),\psi^*\}$.
            \State $\lambda \larrow (1+\psi)^{-1}(\lambda + \psi e_j)$.
        \EndIf
    \EndWhile
    \State \textbf{return} 
  \end{algorithmic}
\end{algorithm}

\subsection{Proofs}


\begin{proof}[\textbf{Proof of Theorem~\ref{thm:warmup-correctness}}]
  The first claim $(i)$ can be shown by the fact that the estimate $\dmu^\pes(x)$ is a high probability lower bound on $\dmu(x^\T\th^*)$.
  To prove $(ii)$, we remark that the sample assignments $N^\naive_x$ is equivalent to solving the $H^\pes$-optimal design problem with $\dmu^{\pes}(x) = \kappa$ for all $x\in\cX$ and then, denoting by $\hlam^\naive$ its solution, assigning $\lcl \hlam_x^\naive \gam(d)\cd d\rcl$ for each arm $x\in\cX$ since $\min_\lam \max_{x\in\cX} \|x\|^2_{V_\lam^{-1}} = d$ by the Kiefer-Wolfowitz Theorem~\cite{kiefer60theequivalence}.
  Since our algorithm's variance estimates satisfy $\dmu^{\pes}(x) \ge \kappa$, we never spend more samples in total, up to an $O(d^2)$ additive term due to the rounding.
  
  We now prove $(iii)$.
  In 1d, one can show that the algorithm accepts at most one arm.
  We first claim that when $\cX_L$ is nonempty, the algorithm accepts an arm $x_a$ such that 
  \begin{align}\label{eq:acceptedarm}
    |x_a \th^*| \ge  |x_0\th^*|
  \end{align}
  (This is equivalent to $|x_a| \ge |x_0|$.)
  We prove the claim by contradiction: Suppose $|x_a \th^*| <  |x_0\th^*|$.
  Since the arm $x_0$ must be accepted when pulled, it must have been true that $x_0$ was not pulled due to being removed by the condition $|x_0| > |x'|/r$ for some rejected arm $x'$.
  This implies that $|x_0 \th^*| > |x' \th^*|/r \ge L/r$, where the last inequality uses the fact that $x'$ was rejected.
  This contradicts $x_0 \in \cX_L$.
  
  Next, we show that the final solution of the $H^\pes$-optimal design problem is as good as placing all the probability mass on the arm that we have accepted.
  Note that the pessimistic estimate of $|\th^*|$ is $\fr{U}{|x_a|}$. 
  Thus, the optimization problem we solve is
  \begin{align*}
    \hx = \arg \max_{x\in\cX} \dot\mu\del{x \fr{U}{x_a} } x^2 ~.
  \end{align*}
  This means that we have
  \begin{align*}
    \dot\mu(\hx \th^*) \hx^2
    \ge  \dmu(\hx \fr{U}{x_a} )\hx^2
    &\ge \dmu(x_a \fr{U}{x_a} )x_a^2
    \\&= \dot\mu(x_a \th^*) x_a ^2 \cd \fr{\dot\mu(U)}{\dot\mu(x_a \th^*)} 
    \\&\ge \dot\mu(x_a \th^*) x_a^2 \cd \fr{\dot\mu(U)}{1/4} \tag{$\dmu(z) \le 1/4, \forall z$ }
    \\&\ge 0.41 \cd  \dot\mu(x_a \th^*) x_a^2  \tag{$U \le 2.399$ } 
    \\&\sr{(a)}{\ge} 0.41 \cd  \dot\mu(x_0 \th^*) x_0^2 
    \\&= 0.41 \cd \fr{ \dot\mu(x_0 \th^*) x_0^2 }{\max_{x \in \in[-1,1]}  \dmu(x\th^*) (x\th^*) ^2} \max_{x \in \in[-1,1]}  \dmu(x\th^*) (x\th^*)^2
    \\&= 0.41 \cd \fr{ \dot\mu(x_0 \th^*) (x_0\th^*)^2 }{0.439..} \max_{x \in \in[-1,1]}  \dmu(x\th^*) x^2
  \end{align*}
  where $(a)$ is by the fact that when $X \le 2.399$ the function $\dot\mu(X) X^2$ is increasing and thus $|x_a|\ge|x_0| \implies \dot\mu(x_a\th^*) (x_a\th^*)^2 \ge \dot\mu(x_0\th^*) (x_0\th^*)^2$.
  
  This proves the statement (iii).
\end{proof}

\begin{proof}[\textbf{Proof of Theorem~\ref{thm:warmup-sc}}]
  First,   the first statement is trivial given that the support size of $\lambda$ is $d(d+1)/2)$.

  To prove $(i)$, we show that the number of iterations are at most $2 + \log_r(|\th^*|/L)$.
  Assume the concentration event $\cE$.
  Suppose that the loop has terminated after $k$ iterations.
  It suffices to consider the case where we have accepted an arm in the last iteration since the case where we never accept an arm can be made more difficult by adding more arms.
  Let $x_j$ be the arm that was tested at $j$-th iteration.
  Since $x_{k-1}$ was rejected, we have $|x_{k-1}| \ge L/|\th^*|$.
  Then, 
  \begin{align*}
    \fr{L}{|\th^*|} \le |x_{k-1}| < \fr{|x_{k-2}|}{r}  < ... < \fr{|x_1| }{r^{k-2}} \le \fr{1}{r^{k-2}} ~.
  \end{align*}
  This implies that $k-2 \le \log_r(|\th^*|/L)$.
  
  To prove $(ii)$, we need to bound the number of samples spent on the accept-reject procedure on an arm $x$.
  Let us use the shortcut $\mu_x = \mu(x\th^*)$.
  Using symmetry, we safely assume $\mu_x < 1/2$ without loss of generality.
  Let us define $\ell:= \mu(-L)$ and $u := \mu(-U)$.
  When the arm $x$ is not accepted yet, we have
  \begin{align}\label{eq:notaccepted}
    \hmu_x - \sqrt{\fr{\hmu_x(1-\hmu_x) 2\ln(3/\dt_N)}{N}} - \fr{3\ln(3/\dt_N)}{N} \le u~.
  \end{align}
  Throughout the proof, we assume the concentration event $\cE$.
  We also will use the standard Bernstein's inequality with the union bound over all samples size and the arm set $\cX$:
  \begin{align}\label{eq:standard-bernstein}
    |\hmu_x - \mu_x| \le 2 \sqrt{\mu_x(1-\mu_x) \ln(2 /\dt_N)} + \ln(2/\dt_N)
  \end{align}
  that can be shown by the standard Chernoff technique along with the well-known upper bound on the moment generating function of a centered random variable $\eps$ such that $|\eps|\le1$ w.p. 1 (e.g., Lemma 7 of~\citet{faury2020improved}).

  Let us omit the subscript $x$ from now on and introduce $c = \ln(3/\dt)$. 
  Suppose the sample of $x$ is not terminated yet, i.e., $x$ is not accepted nor rejected yet.
  We consider two cases: $\mu \ge \fr{\ell + u}{2}$ and $\mu < \fr{\ell+u}{2}$.
  
  \textbf{Case 1. } $\mu \ge \fr{\ell + u}{2}$.\\
  Using the fact that arm $x$ is not accepted yet, from~\eqref{eq:notaccepted}, we have
  \begin{align*}
    - 3 \fr{c}{N}  -u
    &\le - \hmu + \sqrt{\fr{\hmu(1-\hmu)2c}{N} }
    \\&\le - \hmu + \sqrt{\fr{\hmu 2c}{N} }
    \\&\le - \del{1-\fr1a}\hmu + \fr{a}{2}\fr{c}{N}  \tag{for any $a>0$ by Fenchel-Young Ineq.}
    \\&\le - \del{1-\fr1a}\del{\mu - 2 \sqrt{\mu \fr{c}{N} } - \fr{c}{N} }  + \fr{a}{2}\fr{c}{N}  \tag{assume $a > 1$; use \eqref{eq:standard-bernstein}}
    \\&\le - \del{1-\fr1a}\del{ \del{1-\fr{1}{a}} \mu + (-1-a)\fr{c}{N} }  + \fr{a}{2}\fr{c}{N}  
    \\&= - \del{1-\fr1a}^2 \mu + \del{\fr{3}{2}a - \fr{1}{a} }\fr{c}{N}
    \\ \implies  N &\le \fr{\del{\fr{3}{2}a - \fr{1}{a} + 3 }c}{u - \del{1-\fr1a}^2\mu} 
  \end{align*}
  We now choose $a$ such that $u - \del{1-\fr{1}{a} }^2\mu = \fr{\mu - u}{2} $ which is $a = \fr{1}{1-\sqrt{\fr{3}{2}\fr{u}{\mu}-\fr12}} > 1$.
  Then,
  \begin{align*}
    N \le \fr{\del{3a - \fr{2}{a} + 6 }c}{\mu - u} ~.
  \end{align*}
  Using $\mu \ge \fr{\ell+u}{2} $, we have $\mu-u\ge (\ell-u)/2$ and $\fr{u}{\mu}  \le 1$ (note $u > \ell$).
  With algebra, we have
  \begin{align*}
    N \le \del{ \fr{8}{\ell - u} + 6 }\fr{2c}{\ell-u} ~.
  \end{align*}
  
  \textbf{Case 2.} $ \mu < \fr{\ell+u}{2}$ \\
  Since the arm $x$ is rejected yet, we have
  \begin{align*}
    \hmu + \sqrt{\fr{\hmu(1-\hmu) 2c}{N}} + \fr{3c}{N} \ge  {\ell}~.
  \end{align*}
  Then,
  \begin{align*}
    \ell
    &\le \hmu + \sqrt{\fr{\hmu(1-\hmu) 2c}{N}} + \fr{3c}{N}
    \\&\le \hmu + \sqrt{\fr{\hmu 2c}{N}} + \fr{3c}{N}
    \\&\le \del{1+\fr{1}{b}} \hmu + \del{\fr b 2 +3} \fr{c}{N}      \tag{for any $b>0$ }
    \\&\le \del{1+\fr{1}{b}} \del{\mu + 2\sqrt{\fr{\mu}{N} c } + \fr{c}{N} }  + \del{\fr b 2+3} \fr{c}{N} \tag{Use~\eqref{eq:standard-bernstein}}
    \\&\le \del{1+\fr{1}{b}} \del{\del{1+\fr{1}{b}}\mu + (1+b) \fr{c}{N} }  + \del{\fr b 2+3} \fr{c}{N} 
    \\&\le \del{1+\fr{1}{b}}^2\mu  + \del{\fr32 b+\fr{1}{b}+4} \fr{c}{N} 
    \\\implies   N &\le \min_{b>0} \fr{(\fr32 b + \fr{1}{b} + 4 )c}{\ell - \del{1+\fr{1}{b} }^2\mu} ~.
  \end{align*}
  For the case where $\mu < \ell/8$, we can choose $b=1$ and obtain
  \begin{align*}
    N \le \fr{\fr{13}{2} c}{\ell - 4\cd \fr{\ell}{8} }  = 13c~.
  \end{align*}
  When $\ell/8 \le \mu \le \fr{\ell+u}{2}  $, we choose $b$ such that $\ell - \del{1+\fr{1}{b}}^2\mu = \fr12 (\ell - \mu)$, which is $b = \fr{1}{\sqrt{\fr12 + \fr12 \fr\ell\mu} - 1} $.
  Then,
  \begin{align*}
    N \le \fr{(3 b + \fr{2}{b} + 8 )c}{\ell - \mu}~.
  \end{align*}
  One can show that $b$ is in fact of order $1/(\ell - \mu)$, which means the RHS above is like $O(\fr{1}{(\ell-\mu)^2} )$.
  Since $1/b \le \sqrt{4.5} - 1$, $b = \fr{\sqrt{\fr12 + \fr12 \fr \ell \mu} + 1}{\fr12 (\fr \ell \mu - 1)} \le  \fr{\sqrt{\fr12 + \fr12\cd 8} + 1}{\fr12\del{\fr{2\ell}{\ell+u}}-1} \le (\sqrt{4.5}+1) \fr{2(\ell+u)}{\ell-u} $, and $\ell-\mu \ge \fr12 (\ell-u)$, we have
  \begin{align*}
    N \le \del{\fr{19(\ell+u)}{\ell-u} + 3 + 8   } \fr{2c}{\ell-u} ~.
  \end{align*}
  In both cases, we have
  \begin{align*}
    N \le \del{\fr{19}{\ell-u} + 11} \fr{2c}{\ell-u} ~.
  \end{align*}
  
  Altogether, we have shown that when the arm is not rejected nor accepted, the following holds:
  \begin{align*}
    N \le C_0\fr{c}{(\ell-u)^2}
  \end{align*}
  for some absolute constant $C_0$.
  However, note that we must use a union bounds over the sample count $N$ and the arm set $\cX$.
  Thus, we set $c = \ln(3|\cX|N(N+1)/\dt)$.
  This, however, makes the inequality above implicit.
  Letting $A = C_0/(\ell-u)^2$ and $B =3|\cX|/\dt$, 
  \begin{align*}
    N &\le 2A \ln(N) + A \ln(B)
    \\&= 2A \ln( \fr{N}{4A} \cd 4A) + A \ln(B)
    \\&\le 2A \del{\fr{N}{4A} + \ln(4A)} + A \ln(B)
    \\  \implies N &\le  4A\ln(4A) + 2A \ln(B) ~.
  \end{align*}
  Since the RHS above is the bound that $N$ cannot go above without accepting or rejecting the arm, we can deduce that the number of samples spent after being accepted or rejected is at most $1 + 4A\ln(4A) + 2A \ln(B)  =O\del{(\ell-u)^{-2}  \ln\del{\fr{|\cX| (\ell-u)^{-2}}{\dt} }} $.
  Since we rely on both the empirical Bernstein and the standard Bernstein inequalities to be true, our claim holds with probability at least $1-2\dt$.
  
\end{proof}

\section{On the KT Estimator}

We empirically verify the biases of the MLE and KT estimator in Figure~\ref{fig:bias-scale-mle}.

\begin{figure}[thb]
  \centering
  \begin{tabular}{cc}
    \includegraphics[width=0.4\linewidth]{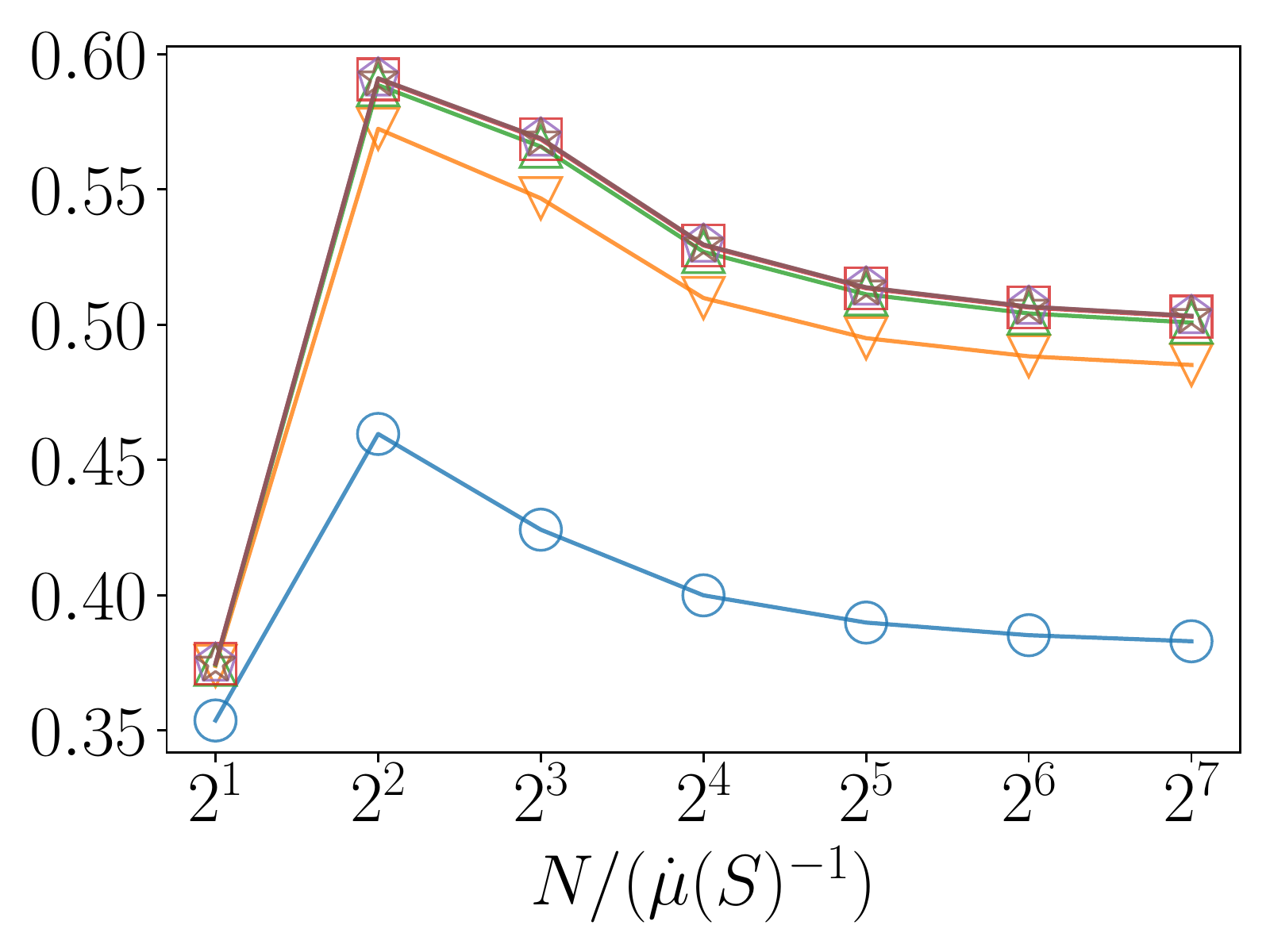}
    & \includegraphics[width=0.4\linewidth]{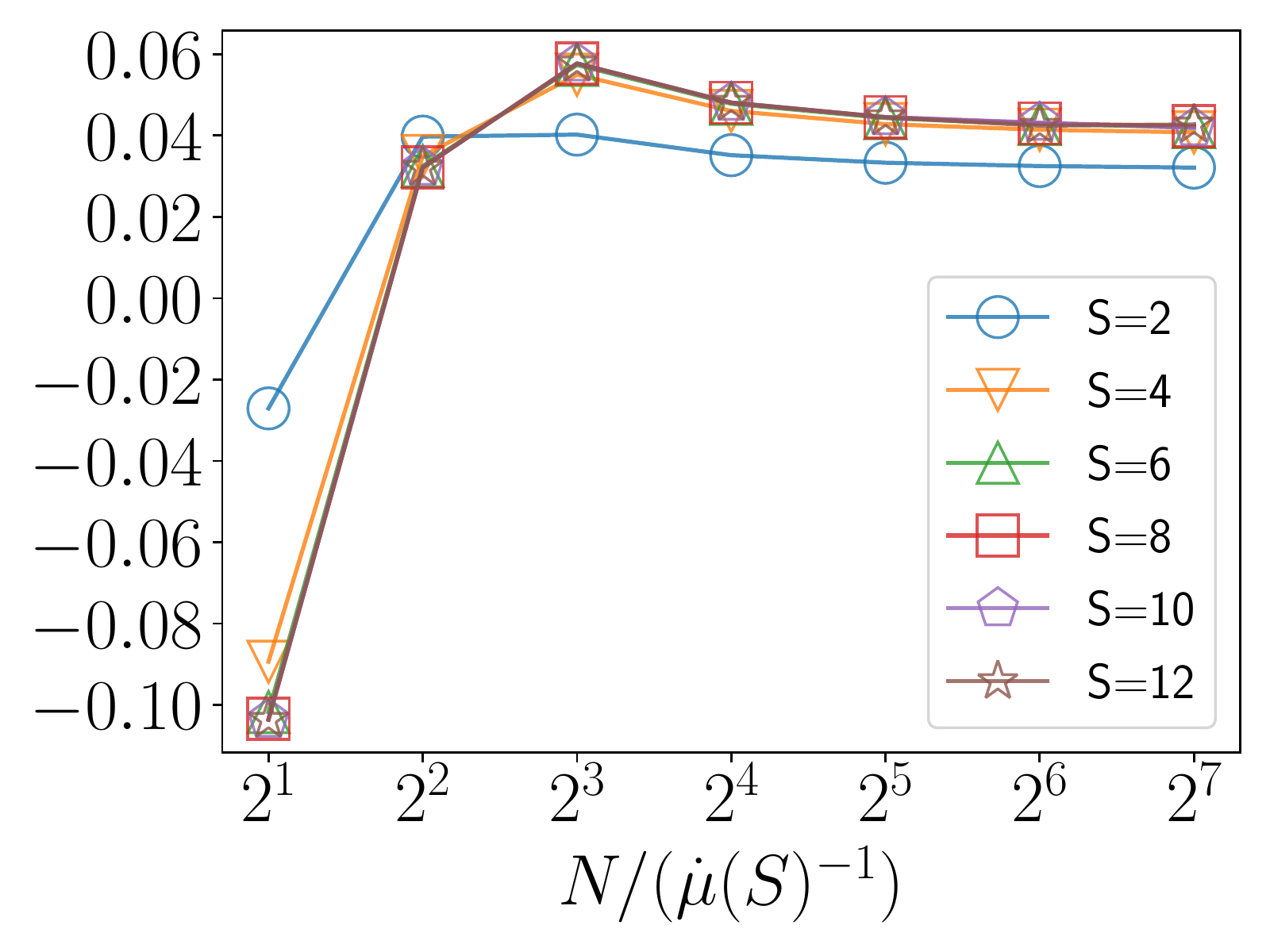}\\
    (a) & (b)
  \end{tabular}
\caption{Numerical verification of the bias of (a) MLE and (b) KT estimator for Bernoulli($\mu(S)$) with $N$ samples that behave like $1/(N\dmu(S))$ and $1/(N\dmu(S))^2$ respectively. 
Details: We consider $N \in \{2,2^2,\ldots,2^7\} \cd \dmu(S)^{-1}$, compute the expectation of the natural parameter estimator minus $S$ using the probability mass function, and then divide this quantity by $\fr{1}{N\dmu(S)}$ for the MLE and $\fr{1}{(N\dmu(S))^2}$ for the KT estimator. 
When $N$ is large enough, the curve flattens out, which confirms the conjectured rates of bias decay.
When the MLE is $\infty$ or $-\infty$, we replace it by $\log(\fr{\hp}{1-\hp} )$ with $\hp=\fr{n-.5}{n} $ or $\hp=\fr{.5}{n}$ respectively to avoid the bias being undefined.
Remark: we have set the sample size $N \gsim \dmu(S)^{-1}$ as this is required for the lower order term of Bernstein's inequality to become non-dominant. 
}
\label{fig:bias-scale-mle}
\end{figure}

Indeed, the bias of the KT estimator scales like $\fr{1}{(N\dmu)^2}$.
Assuming this rate, one can use an identical argument to Section~\ref{sec:dimension} to compute that $N = \Omega(d^{1/3})$ is sufficient to escape the bias-dominating regime, which would imply that the total warmup sample complexity can be $O(d^{4/3})$.
To enjoy small bias beyond the canonical basis, we must define the right multi-dimensional extension of the KT estimator.
The right extension seems to be the regularized MLE with Jeffrey's prior~\cite{firth93bias}.
Developing the fixed-design concentration inequality for such a regularized estimator with a tighter warmup condition dependence on $d$ will require different techniques from~\citet[Theorem 1]{jun2020improved}, and we leave it as future work.

\section{Comparison of Experimental Designs}
In the following we compare $G$ and $H$ optimal designs in an example. 
\begin{figure}
    \centering
    \includegraphics[width=.75\linewidth]{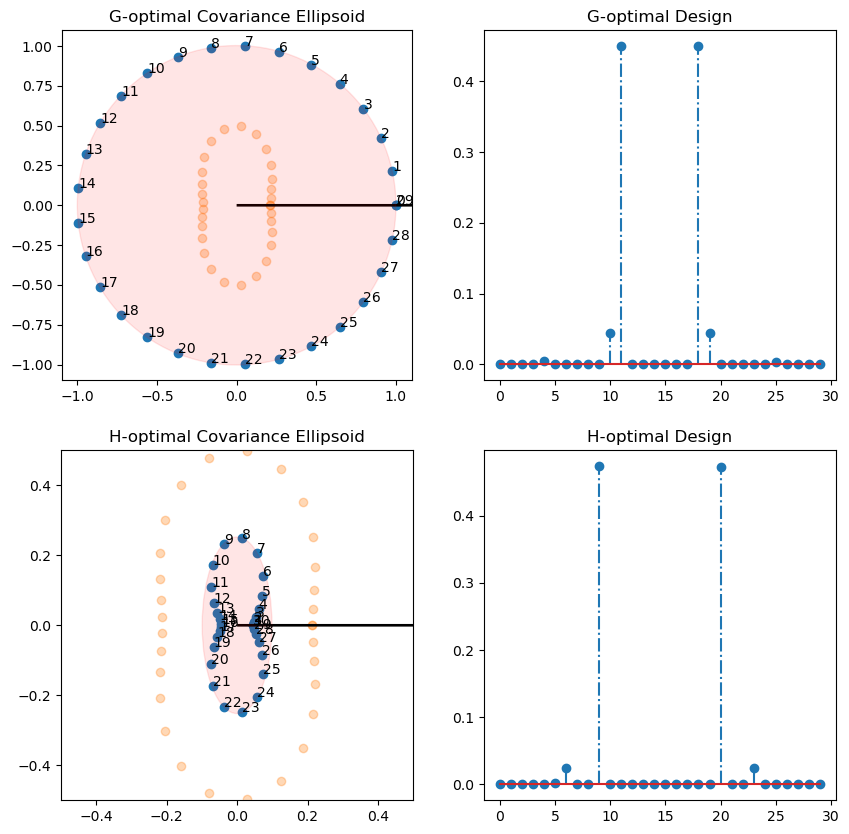}
    \caption{Contrast of G and H optimal designs.}
    \label{fig:exp_designs}
\end{figure}

We consider a setting where $\mc{X} = \{(\cos(2\pi i/30),\sin(2\pi i/30))\}_{i=0}^{29}$ and $\theta^{\ast} = (3,0)$. We compute the optimal designs, $g^{\ast} = \min_{\lambda\in \triangle_{\mc{X}}} \max_{x\in \mc{X}} \|x\|^2_{H(\lambda)^{-1}}$ with corresponding design $\lambda_G\in \triangle_{\mc{X}}$ and $h^{\ast} = \min_{\lambda\in \triangle_{\mc{X}}} \max_{x\in \mc{X}} \dot\mu(x^{\top}\theta)^2\|x\|^2_{H(\lambda)^{-1}}$ with corresponding design $\lambda_{H}\in \triangle_{\mc{X}}$. In the left hand plots, in the top the points in blue correspond to $\mc{X}$, where as the points in the bottom plot correspond to $\{\dot\mu(x^{\top}\theta)x_i\}_{i=0}^{29}$. The orange points in both plots correspond to $\{\sqrt{\dot\mu(x^{\top}\theta)}x_i\}_{i=0}^{29}$. Finally, the red ellipses correspond to $\{x\in \mathbb{R}^2:x^{\top} H(\lambda_G)^{-1}x\leq g^{\ast} \}$ in the top plot, and $\{x\in \mathbb{R}^2:x^{\top} H(\lambda_H)^{-1}x\leq h^{\ast} \}$. 

The right column of plots are stem plots of the resulting design $\lambda_G$ and $\lambda_H$. Note that they are both fairly sparse allocations.

We can see the impact of scaling on the resulting design, namely in the $G$-optimal design we place most of our mass on points $10, 11$ and $18, 19$. In the $H$-optimal design we put our mass on $6, 9$ and $20, 23$ since points that are collinear to $\theta$ get scaled closer to 0 and so our mass is supported along points closer to the vertical axis. In particular, the ellipse stemming from the $H$-optimal design is far smaller than the ellipse stemming from the $G$-optimal design, demonstrating that the value of $h^\ast \ll g^\ast$. 





\end{document}